\newtheorem{theorem*}{Theorem}
\newtheorem{corollary*}{Corollary}[theorem*]
\newtheorem{lemma*}[theorem*]{Lemma}
\title{Topological Gradient-based Competitive Learning}
\author{
  Pietro~Barbiero\hspace{1mm}\href{https://orcid.org/0000-0003-3155-2564}{\includegraphics[scale=0.06]{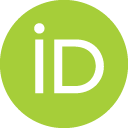}}\\
  Cambridge University\\
  United Kingdom\\
  \texttt{barbiero@tutanota.com} \\
   \And
 Gabriele Ciravegna\hspace{1mm}\href{https://orcid.org/0000-0002-6799-1043}{\includegraphics[scale=0.06]{orcid.png}}\\
  University of Florence\\
  Italy\\
   \And
 Vincenzo Randazzo\hspace{1mm}\href{https://orcid.org/0000-0003-3640-8561}{\includegraphics[scale=0.06]{orcid.png}}\\
  Politecnico di Torino\\
  Italy\\
   \And
 Giansalvo Cirrincione\hspace{1mm}\href{https://orcid.org/0000-0002-2894-4164}{\includegraphics[scale=0.06]{orcid.png}}\\
  University of Picardie Jules Verne\\
  France\\
  University of South Pacific\\
  Fiji
}
\begin{document}
\maketitle

{\let\thefootnote\relax\footnote{P. Barbiero conceived the main idea and the dual method. G. Cirrincione developed the theory and validated theorems, assumptions, and proofs with P. Barbiero. G. Ciravegna and P. Barbiero conceived, planned, and carried out the experiments. All authors discussed the results and contributed to the final manuscript.}}

\begin{abstract}
\textit{Topological learning} is a wide research area aiming at uncovering the mutual spatial relationships between the elements of a set. Some of the most common and oldest approaches involve the use of unsupervised competitive neural networks. However, these methods are not based on gradient optimization which has been proven to provide striking results in feature extraction also in unsupervised learning. Unfortunately, by focusing mostly on algorithmic efficiency and accuracy, deep clustering techniques are composed of overly complex feature extractors, while using trivial algorithms in their top layer. The aim of this work is to present a novel comprehensive theory aspiring at bridging competitive learning with gradient-based learning, thus allowing the use of extremely powerful deep neural networks for feature extraction and projection combined with the remarkable flexibility and expressiveness of competitive learning. 
In this paper we fully demonstrate the theoretical equivalence of two novel gradient-based competitive layers. Preliminary experiments show how the dual approach, trained on the transpose of the input matrix i.e. $X^T$, lead to faster convergence rate and higher training accuracy both in low and high-dimensional scenarios.

\end{abstract}

\keywords{Clustering \and CHL \and Competitive Learning \and Deep Learning \and Duality \and Gradient-based clustering \and Linear network \and Prototype \and Theory \and Topology \and Unsupervised}

\section{Introduction}
From the dawn of Artificial Intelligence (AI), data clustering has always been a field of great interest in the scientific community. First approaches were mainly based on similarity measures among data. 
Prominent methods like k-Means \cite{macqueen1967some}, Gaussian Mixture Models (GMM) \cite{mclachlan1988mixture} and more recently Density Based Spatial Clustering (DBSCAN) \cite{ester1996density} have been extensively used to uncover unknown relations in unsupervised problems. 
These types of approaches are capable of finding groups of samples that are similar, but they cannot detect the underlying topology. Hierarchical clustering partially solved this issue by creating a hierarchy of clusters either with an agglomerative \cite{sibson1973slink} \cite{defays1977efficient} or with a divisive strategy \cite{kaufman2009finding} \cite{cirrincione2020gh}. Other approaches, instead, try to solve this problem by introducing a topological structure among cluster nodes. The first algorithm exploiting this concept is the Self-Organizing-Map (SOM) by Kohonen \cite{kohonen1982self}, where a neural network is trained to represent the input space using a grid, whose number of units and their connections, i.e. the topology, is defined in advance. 
Alternatively, techniques such as Neural Gas (NG) \cite{martinetz1991neural}, Growing Neural Gas (GNG) \cite{fritzke1995growing} and their variants \cite{fritzke1997self,ghng,GHBNG} apply the the Competitive Hebbian Learning (CHL) \cite{hebb2005organization,martinetz1993competitive,chl} for defining local topology; indeed, given an input sample, the two closest neurons, called first and second winners, are linked by an edge \cite{martinetz1994topology,fritzke1997some}. All the previous cited methods belong to the \textit{competitive learning} field; here, units compete to represent the input sample, i.e. they move towards it depending on their distances and the network current topology (neighbourhoods). 
Another issue concerning the above-cited methods is the well-known curse of dimensionality \cite{barbiero2020modeling,altman2018curse}. Euclidean measures are no more effective when dealing with high-dimensional data such as images. To this aim, many works proposed dimensionality reduction and feature extraction methods as pre-processing before the clustering step like Principal Component Analysis (PCA) \cite{pearson1901liii} and kernel functions. These methods are indeed capable of mapping row data into a feature space with a much lower dimensionality. However, the effectiveness of such techniques is limited when dealing with complex latent structures.    
Recently, however, Deep Neural Networks (DNN), and more specifically  Convolutional Neural Network (CNN) \cite{lecun1989backpropagation}, have incredibly improved processing performances when dealing with highly-dimensional data in supervised learning. As a consequence, many approaches tried to apply these methods also to the unsupervised learning field. Deep neural networks are capable to transform high-dimensional data into clustering-friendly representations. By employing DNN, clustering and feature transformation are now treated as a single task.
DNN architecture may be directly trained through the optimization of a clustering loss. The choice of the learning function is particularly important when dealing with this type of architecture. As a matter of fact, straightforward employment of DNN may lead to corrupted feature transformation, where data are mapped to compact clusters that do not reflect the real data topology. In order to overcome this issue, some works proposed to exploit both unsupervised and supervised network pre-training, weight regularization, and data augmentation techniques. 
For what concerns unsupervised network pre-training, common strategies consider training Restricted Boltzmann Machines (RBM) \cite{smolensky1986information} or AutoEncoders (AE) \cite{hinton1994autoencoders} and later fine-tune the networks (only the encoder for AE) through a clustering learning function only \cite{xie2016unsupervised} \cite{chen2015deep}.
Supervised pre-training techniques are instead commonly employed when dealing with image data. Indeed, classical clustering algorithms perform well when using the feature extracted from the last layer of a CNN, pre-trained on big image dataset as ImageNet \cite{hsu2017cnn}.
Direct approaches that do not consider any network pre-training, have been recently proposed in \cite{hu2017learning}, \cite{yang2016joint}, \cite{chang2017deep}.
Otherwise, clustering learning procedures may be integrated with a network learning process. This allows the employment of more complex architectures like Autoencoders (AE), Variational-Autoencoders (VAE) or Generative Adversarial Networks (GAN). Such techniques commonly consider a double stage learning in which they first learn a good representation of the input space through a network loss function and later fine tune the network by also optimizing a clustering-specific loss.

To the best of our knowledge, however, no previous work proposed to join the strength of DNN with the higher representation capabilities of competitive learning approaches. In this work, we study two possible variants of a neural network architecture in which competitive learning is taken into consideration by the loss function. The proposed architectures can either be employed by themselves or they can be placed on top of more complex neural architectures such as AE, CNN, VAE or GAN. 



This work is organized in three main sections. The first one 
describes two novel methods that can be used to join competitive and gradient-based learning, namely the \textit{gradient-based competitive layer} (GBC layer) and the \textit{dual GBC layer} (DGBC layer). The following section 
presents preliminary experiments showing the benefits and the differences of the two approaches. Finally, the last section 
describes how the methods presented in this work can be further developed and extended.

\section{Gradient-based competitive layers} \label{sec:methods}

This section describes two different approaches that join competitive and gradient-based learning.
In a standard competitive layer \cite{rumelhart1985feature,barlow1989unsupervised,haykin2007neural}, every competing neuron is described by a vector of weights $w_i$, representing the position of the neuron (a.k.a. \textit{prototype}) in the input space. The inverse of the Euclidean distance between the input data $x_k$ and the weight vector $w_i$ represents the similarity between the input and the prototype. For every input vector $x_k$, the prototypes \textit{compete} with each other to see which one of them is the most similar to that particular input vector. By following the Competitive Hebbian Learning (CHL) rule \cite{hebb2005organization,martinetz1993competitive}, the two closest prototypes to $x_k$ are connected using an edge, representing their mutual activation.

In general, competitive learning is based on more or less heuristic rules. Instead, the family of k-Means algorithms is justified by the minimization of a loss function, representing the quantization error.
The first proposed approach (GBC layer), instead, is based directly on the minimization of this loss. This is performed by using a first-order gradient technique, for straightly estimating the prototypes.
Adding this layer to the top of a deep neural network, a deep clustering can be performed by backpropagating the gradient information from the clustering to the previous layers.
As a consequence, the benefits of using a powerful feature extractor and a sophisticated topological learning algorithm can be both exploited. 
However, the loss used by this approach is only function of the training set and the weights of the layer (the output neuron weights are the prototypes). This means the outputs of the layer are not taken into account. In this sense, the first approach is better interpreted as a straight competitive learning on the input set than a true layer to be added.
The second approach (dual GBC layer) is more \textit{neural}, because it represents a true transformation of the inputs.
It is an alternative approach for the implementation of a competitive layer which is trained using the transpose of the input matrix, i.e. $X^T$.


\subsection{A new insight in neural theory} \label{sec:transpose}

A deep neural network can be interpreted as a nonlinear function $f$ mapping input data $x \in \mathbb{R}^d$ into a different representation $y \in \mathbb{R}^p$ which is optimized according to an error function $\mathcal{L}$. Hence, a concise representation of a neural network is a pair $(f, \mathcal{L})$ such that:
\begin{equation}
    f: \mathbb{R}^d \rightarrow \mathbb{R}^p, \quad \mathcal{L}: \cdot \rightarrow \mathbb{R}
\end{equation}
The relationship between $d$ and $p$ and the kind of loss function used to train the model determine the kind of learning task. 
In most settings, deep neural networks are used to contract the input space into an interpretable codomain where the performance of the network can be easily assessed. For instance, if $p=1$ and the loss function is the mean squared error between the output of the network ($y \in \mathbb{R}$) and a target variable ($t \in \mathbb{R}$), the learning task is called regression:
\begin{equation}
    f: \mathbb{R}^d \rightarrow \mathbb{R}, \quad \mathcal{L}= mse(y,t) 
\end{equation}
Another common learning task is classification which can be obtained by setting $p=c$, where $c$ corresponds to the number of classes, and by using a cross-entropy error function $H$:
\begin{equation}
    f: \mathbb{R}^d \rightarrow \mathbb{R}^c, \quad
    \mathcal{L}= H(y,t)
\end{equation}
In practice, the training process is performed using a dataset composed of objects $X \in \mathbb{R}^{n,d}$ and (for supervised tasks) targets $T \in \mathbb{R}^{n,p}$, where $n$ is the number of samples, $d$ the number of input features, and $p$ the number of output features. Hence, the result of the training process can be seen as a projection of the input matrix $X$ into a (usually) lower dimensional representation $Y$:
\begin{eqnarray}
    &f&: \mathbb{R}^{n,d} \rightarrow \mathbb{R}^{n,p} \qquad [\text{usually } p \ll d] \nonumber \\
    &Y& = f(X)
\end{eqnarray}
Therefore, a deep neural network can be summarized as a nonlinear map \textit{reducing} the number of columns of a matrix, while keeping the original number of rows.
If the rows of the input matrix represent a set of samples and the columns a set of features (as it usually is), then the neural network is actually shrinking the number of features.

However, if we consider the transpose problem, where $X^T \in \mathbb{R}^{d,n}$, the neural network can still be used to transform the input matrix into useful representations. Also in this case, the output must keep the same number of rows of the input by construction, i.e. $Y \in \mathbb{R}^{d,k}$. If $k<n$ the neural network is \textit{contracting} the input, while if $k>n$ the neural network is \textit{augmenting} the input.
While normally neural networks are used to generate an abstract representation of the input features useful for supervised tasks like classification or regression, the transpose problem can be used to generate an abstract representation of the input samples useful for learning the topology of the input manifold. In fact, by choosing an appropriate clustering error function $\mathcal{C}$ we can define a deep neural network learning the positions of cluster centroids (a.k.a. \textit{prototypes}) as:
\begin{equation}
    f: \mathbb{R}^{d,n} \rightarrow \mathbb{R}^{d,k}, \quad \mathcal{L} = \mathcal{C}
\end{equation}
where $k$ corresponds to the number of output units of the network. Each of the $k$ output unit returns as output a $q^T \in \mathbb{R}^d$ vector representing a position in the feature space $\mathbb{R}^d$. Hence, by optimizing such positions according to a clustering error function, the neural network can learn prototypes corresponding to cluster centroids.

\subsection{Duality theory} \label{sec:duality}

The intuitions outlined in the previous section can be formalized in a general theory which considers the duality properties between a linear single-layer neural network and its dual, defined as a network which learns on the transpose of the input matrix and has the same number of neurons.

\begin{figure}[th]
    \centering
    \includegraphics[width=0.95\columnwidth, trim={100 60 100 40}, clip]{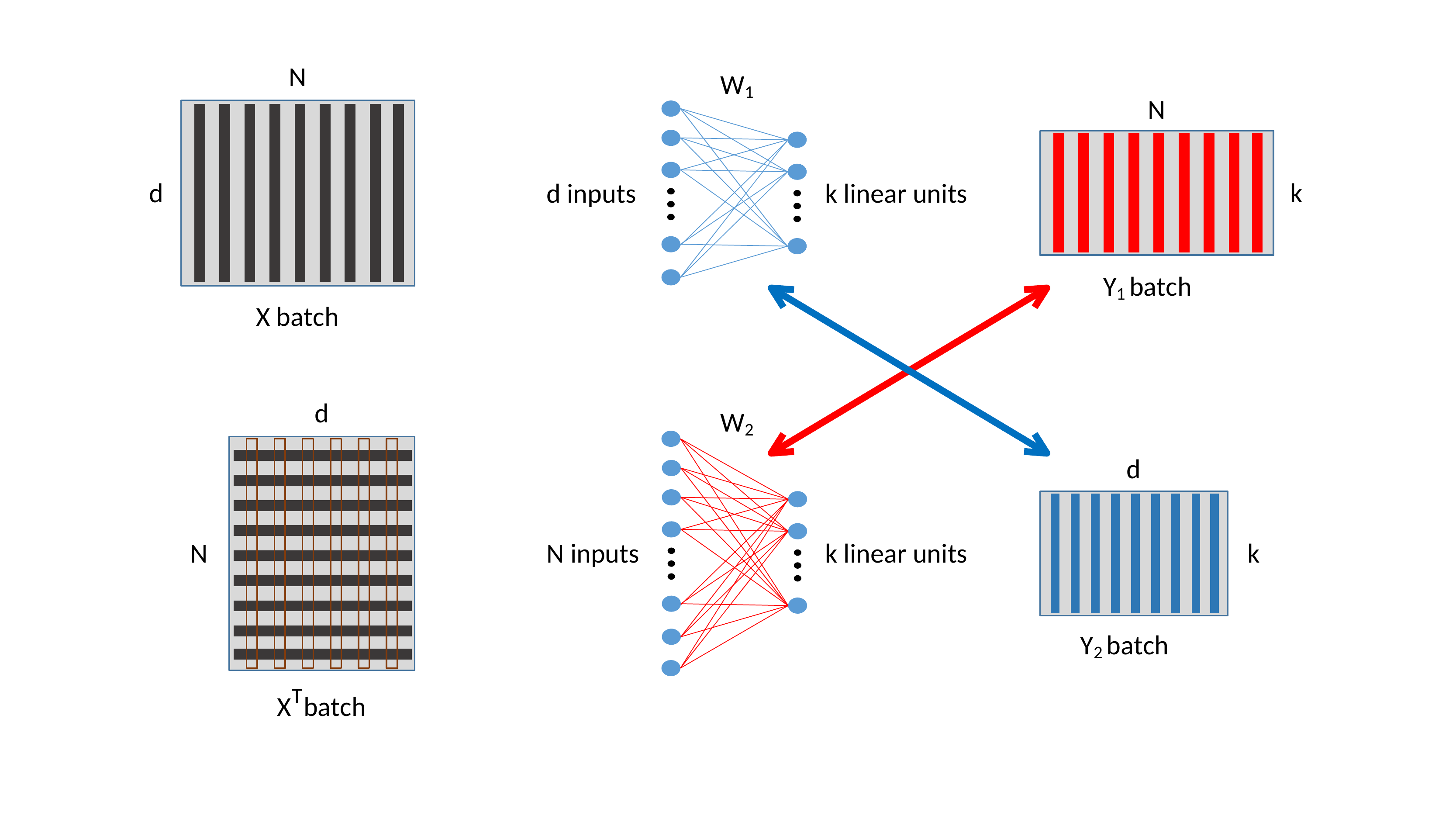}
    \caption{Dual linear single-layer neural networks.}
    \label{fig:theo-1}
\end{figure}

Consider a single layer neural network whose outputs have linear activation functions. There are $d$ input units and $k$ output units which represent a continuous signal in case of regression or class membership (posterior probabilities in case of cross entropy error function) in case of classification. A batch of $n$ samples, say $X$, is fed to the network. The weight matrix is $W_1$, where the  element $w_{ij}$ represents the weight from the input unit $j$ to the neuron $i$. 
The single layer neural network with linear activation functions in the lower scheme is here called the dual network of the former one. It has the same number of outputs and $n$ inputs. It is trained on the transpose of the original $X$ database. Its weight matrix is $W_2$ and the output batch is $Y_2$.
The following theorems state the duality conditions of the two architectures.
Figure \ref{fig:theo-1} represents the two networks and their duality.

\begin{theorem*}[Network duality in competitive learning] \label{thm:duality}
Given a loss function for competitive learning based on prototypes, a single linear network (base) whose weight output neurons are the prototypes is equivalent to another (dual) whose outputs are the prototypes, under the following assumptions:
\begin{enumerate}
    \item the input matrix of the dual network is the transpose of the input matrix of the base network;
    \item the samples of the input matrix $X$ are uncorrelated with unit variance
\end{enumerate}
\end{theorem*}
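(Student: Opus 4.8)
The plan is to establish the equivalence by showing that the base network and its dual, under the stated assumptions, learn the same set of prototypes via a correspondence between the weight matrix of one and the output matrix of the other. First I would set up notation carefully: the base network has weight matrix $W_1 \in \mathbb{R}^{k,d}$, so that its output on the batch is $Y_1 = X W_1^T \in \mathbb{R}^{n,k}$, and the $k$ rows of $W_1$ are declared to be the prototypes $q_i^T \in \mathbb{R}^d$. The dual network reads $X^T \in \mathbb{R}^{d,n}$, has weight matrix $W_2 \in \mathbb{R}^{k,n}$, and produces output $Y_2 = X^T W_2^T \in \mathbb{R}^{d,k}$; here the $k$ columns of $Y_2$ are declared to be the prototypes. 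The claim of equivalence should then be made precise as: the prototypes stored in the base network's weights coincide (up to the correspondence induced by the two assumptions) with the prototypes read off the dual network's outputs, and crucially, that a single clustering loss $\mathcal{C}$ can be written as a function of these prototypes in a way that is identical for both architectures.

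Next I would write out the competitive/quantization loss $\mathcal{C}$ explicitly in terms of prototypes — e.g. the sum over samples of the squared distance to the responsible prototype — and observe that for the base network this is a function of the entries of $W_1$, while for the dual network it must be expressed through $Y_2 = X^T W_2^T$. The central computation is to show that the prototype matrix of the dual, namely $Y_2 = X^T W_2^T$, can be made to equal the prototype matrix $W_1^T$ of the base. This is where assumption 2 enters: I expect that imposing $W_1 = W_2 X$ (or an analogous relation linking the two weight matrices through $X$) forces $Y_2 = X^T X W_2^T$, and the hypothesis that the samples are uncorrelated with unit variance means $X^T X = n I$ (or $I$ after normalization), collapsing this to a scalar multiple of $W_2^T$ and thereby identifying the dual outputs with the base weights. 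I would then verify that under this identification the gradient of $\mathcal{C}$ with respect to the free parameters of one network pulls back exactly to the gradient for the other, so that the two gradient-based learning dynamics are conjugate and converge to identical prototypes.

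The main obstacle I anticipate is pinning down the precise linear-algebraic bridge between $W_2$ and the prototypes, and confirming that the decorrelation assumption is exactly what is needed to make $X^T X$ (or $X X^T$) a scalar multiple of the identity on the relevant subspace. In particular one must be careful about dimensions: $X^T X \in \mathbb{R}^{d,d}$ versus $X X^T \in \mathbb{R}^{n,n}$, and only one of these is controlled by ``samples uncorrelated with unit variance.'' I would need to check that the assumption is applied to the correct Gram matrix and that any rank deficiency (when $n \neq d$) does not break the identification of prototypes. A secondary subtlety is that competitive assignment (which prototype is the winner for each sample) is a discrete, piecewise operation; I would argue that since both networks represent the \emph{same} prototypes in the \emph{same} feature space $\mathbb{R}^d$, the winner assignment and hence the loss value agree pointwise, so the equivalence holds not just at optima but throughout training. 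Establishing that the decorrelation hypothesis is both necessary and sufficient for this clean collapse is the step I expect to require the most care.
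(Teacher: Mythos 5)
Your proposal follows essentially the same route as the paper's own proof: both arguments reduce to the single linear-algebraic step of linking the two weight matrices through $X$ (the paper right-multiplies $Y = WX$ by $X^T$ and uses the whitening assumption to cancel the $d\times d$ Gram matrix), thereby identifying the dual network's outputs with the base network's weights, i.e.\ the prototypes, after which equivalence follows because the same prototype-based loss is applied to both. Only two small remarks: in your middle paragraph the collapsed quantity should read $X^T X W_1^T = W_1^T$ (a multiple of $W_1^T$, not of $W_2^T$), and your concern about which Gram matrix the hypothesis controls is resolved exactly as you suspected --- the assumption must be read as whitened data, i.e.\ the $d\times d$ identity ($X^T X = I_d$ in your convention, written $X X^T = I_d$ in the paper's), which the paper itself states somewhat ambiguously.
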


\begin{proof}
Consider a loss function based on prototypes, whose minimization is required for competitive learning.
From the assumption on the inputs (rows of the matrix $X$), it results $X X^T=I_d$. A single layer linear network is represented by the matrix formula:
\begin{equation}
    Y = W X = \Big[ \textrm{prototype}_1 \dots \textrm{prototype}_k \Big] X
\end{equation}
By multiplying on the right by $X^T$, it holds:
\begin{equation}
    W X X^T = Y X^T
\end{equation}
Under the second assumption:
\begin{equation}
    W = \Big[ \textrm{prototype}_1 \dots \textrm{prototype}_k \Big] = Y X^T
\end{equation}
This equation represents a (dual) linear network whose outputs are the prototypes $W$. Considering that the same loss function is used for both cases, the two networks are equivalent.
\end{proof}

This theorem directly applies to the GBC (base) and DGBC (dual) neural networks if the assumption $2$ holds for the training set. If not, a preprocessing, e.g. batch normalization, can be performed.

\begin{theorem*}[Impossible complete duality]
Two dual networks cannot share weights as  $W_1=Y_2$ and  $W_2=Y_1$ (complete dual constraint), except if the samples of the input matrix $X$ are uncorrelated with unit variance.
\end{theorem*}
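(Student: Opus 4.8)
The plan is to take the complete dual constraint literally and push it through the two linear maps that define the base and dual networks, collapsing it to a matrix identity whose only nondegenerate solution is assumption~2 of the preceding theorem. Recall that the base network computes $Y_1 = W_1 X$, with the rows of $W_1 \in \mathbb{R}^{k \times d}$ holding the prototypes, while the dual network, fed with $X^T$, computes $Y_2 = W_2 X^T$, where $X \in \mathbb{R}^{d \times n}$ and $W_2 \in \mathbb{R}^{k \times n}$. With these shapes, $W_1 = Y_2$ and $W_2 = Y_1$ are the only dimensionally consistent ways of sharing weights, so I would start from exactly this pair of identities.

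First I would substitute the network equations into the two sharing conditions and eliminate one weight matrix at a time. From $W_2 = Y_1 = W_1 X$ together with $W_1 = Y_2 = W_2 X^T$ I obtain $W_1 = W_1 X X^T$, that is $W_1(X X^T - I_d) = 0$; eliminating $W_1$ instead yields the symmetric identity $W_2(X^T X - I_n) = 0$. Both must hold at once, and each asserts that the rows of the corresponding weight matrix lie in the eigenspace of $X X^T$ (respectively $X^T X$) for the eigenvalue $1$.

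The heart of the argument is to turn this eigenspace confinement into the full condition $X X^T = I_d$. The key observation is that the complete dual constraint is meant to characterise the architecture, hence to hold for the prototypes that training actually produces, which are not confined a priori to any proper subspace of $\mathbb{R}^d$. Imposing $W_1(X X^T - I_d) = 0$ for arbitrary $W_1$ -- equivalently, for a $W_1$ whose rows span $\mathbb{R}^d$ -- forces the right factor to vanish and gives $X X^T = I_d$, which is precisely the ``uncorrelated samples with unit variance'' hypothesis. I expect this genericity/rank step to be the main obstacle, because for a single degenerate $W_1$ the identity $W_1 = W_1 X X^T$ can perfectly well hold without $X X^T = I_d$; making the statement rigorous therefore hinges on stating clearly over which family of weights the constraint is required to hold.

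Finally I would settle the ``except if'' clause by exhibiting the converse. When $X X^T = I_d$, choosing $W_2 = W_1 X$ gives $W_2 X^T = W_1 X X^T = W_1$, so both $W_1 = Y_2$ and $W_2 = Y_1$ hold for every $W_1$, proving that complete duality is realisable exactly in this case. I would also note that the second identity $W_2(X^T X - I_n) = 0$ is then automatic: the rows of $W_2 = W_1 X$ already lie in the row space of $X$, which for orthonormal rows is the eigenvalue-$1$ space of the projector $X^T X$, so no hidden constraint relating $n$ and $d$ is smuggled in through that equation.
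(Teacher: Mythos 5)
Your proposal follows the same algebraic route as the paper's own proof: substitute $Y_1 = W_1 X$ and $Y_2 = W_2 X^T$ into the complete dual constraint and collapse it to the two identities $W_1 = W_1 X X^T$ and $W_2 = W_2 X^T X$. Where you differ is in what you do with these identities, and on each count your version is the more careful one. First, the paper cancels the weight matrices outright, concluding $X X^T = I_d$ and $X^T X = I_n$ with no hypothesis on $W_1$, $W_2$; you correctly flag that this cancellation is only valid when the constraint is imposed over a family of weights whose rows span the ambient space, since a single degenerate $W_1$ can satisfy $W_1(X X^T - I_d)=0$ with $X X^T \neq I_d$ --- exactly the genericity reading the paper leaves implicit. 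Second, you prove the converse realizability (if $X X^T = I_d$, then taking $W_2 = W_1 X$ makes both constraints hold for every $W_1$), which the paper's ``except if'' clause asserts but never establishes. Third, your closing observation is a genuine sharpening rather than a restatement: the paper presents $X^T X = I_n$ as a second necessary condition, yet together with $X X^T = I_d$ this would force $n = d$ (a square orthogonal data matrix), an implausibly strong requirement; you show instead that once $X X^T = I_d$ and $W_2 = W_1 X$, the identity $W_2(X^T X - I_n) = 0$ holds automatically because the rows of $W_2$ lie in the row space of $X$, on which the projector $X^T X$ acts as the identity, so no hidden relation between $n$ and $d$ is needed. In short: same skeleton as the paper, but you fill the rank gap, supply the missing converse, and correct an over-claim in the paper's pair of necessary conditions.
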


\begin{proof}
From the duality of networks and their linearity, for an entire batch it follows:
\begin{eqnarray}
    \begin{cases}
    Y_1 &= W_1 X \\
    Y_2 &= W_2 X^T
    \end{cases}
    &\implies& W_1 = Y_1 X^T \nonumber \\ 
    &\implies& W_1 = W_1 X X^T \nonumber \\ 
    &\implies& X X^T = I_d
\end{eqnarray}
\begin{eqnarray}
    \begin{cases}
    Y_1 &= W_1 X \\
    Y_2 &= W_2 X^T
    \end{cases}
    &\implies& W_2 = Y_2 X^T \nonumber \\ 
    &\implies& W_2 = W_2 X^T X \nonumber \\ 
    &\implies& X^T X = I_n
\end{eqnarray}
where $I_d$ and $I_n$ are the identity matrices of size $d$ and $n$, respectively. These two final conditions are only possible if the samples of the input matrix $X$ are uncorrelated with unit variance, which is not the case in (almost all) machine learning applications.
\end{proof}
\begin{theorem*}[Half duality I]
Given two dual networks, if the samples of the input matrix $X$ are uncorrelated with unit variance and if  $W_1=Y_2$ (first dual constraint), then  $W_2=Y_1$ (second dual constraint).
\end{theorem*}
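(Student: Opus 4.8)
The plan is to proceed by direct matrix algebra, mirroring the two governing equations $Y_1 = W_1 X$ and $Y_2 = W_2 X^T$ together with the hypothesis $W_1 = Y_2$, and to isolate $W_2$ until it can be recognized as $Y_1$. As in the preceding proofs, the engine of the argument is the orthonormality identity supplied by the second assumption; the form relevant here is the sample Gram identity $X^T X = I_n$, which (exactly as used in the second line of the \emph{Impossible complete duality} proof) is the condition naturally attached to the constraint $W_2 = Y_1$. So I would open by recording these facts and then manipulate a single chain of equalities.

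First I would start from the definition of the dual network, $Y_2 = W_2 X^T$, and multiply on the right by $X$ — the transpose of the operation that produced $W = Y X^T$ in the base theorem. This gives $Y_2 X = W_2 X^T X$, and invoking $X^T X = I_n$ collapses the right-hand side to $W_2$. Then I would substitute the first dual constraint $W_1 = Y_2$ to rewrite $W_2 = Y_2 X = W_1 X$, and finally apply the base-network definition $Y_1 = W_1 X$ to conclude the claim. Compactly,
\begin{equation}
    W_2 = W_2 X^T X = (W_2 X^T) X = Y_2 X = W_1 X = Y_1 .
\end{equation}

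The step I expect to be the real subtlety, rather than a routine calculation, is selecting the correct one-sided identity and the correct side on which to multiply. If one instead combined $W_1 = Y_2$ with the base relation $W_1 = Y_1 X^T$ (valid under $XX^T = I_d$), one would only reach $(W_2 - Y_1) X^T = 0$; since $X^T \in \mathbb{R}^{n,d}$ is generally not right-invertible (its columns span only a $d$-dimensional subspace of $\mathbb{R}^n$), this does \emph{not} force $W_2 = Y_1$. This is precisely the obstruction exposed by the \emph{Impossible complete duality} theorem, and it clarifies why the orthonormality hypothesis — specifically in the $X^T X = I_n$ form — is indispensable and the first dual constraint alone is never enough. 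Hence the argument must multiply the dual equation by $X$ and use $X^T X = I_n$, which is exactly the route displayed above.
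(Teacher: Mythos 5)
Your proof is correct and is essentially the paper's own argument: both rest on substituting the constraint $W_1 = Y_2 = W_2 X^T$ into the network equations and collapsing via $X^T X = I_n$, your chain $W_2 = W_2 X^T X = Y_2 X = W_1 X = Y_1$ being the paper's chain $Y_1 = W_1 X = W_2 X^T X = W_2$ read in reverse. Your closing remark on why the $XX^T = I_d$ route fails is a sound addition but not needed for the result.
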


\begin{proof}
From the first dual constraint (see Figure \ref{fig:theo-3}, right), for the second network it stems:
\begin{equation}
    Y_2 = W_1 = W_2 X^T
\end{equation}
Hence:
\begin{equation}
    Y_1 = W_1 X \implies Y_1 = W_2 X^T X
\end{equation}
under the second assumption on $X^T$ from Theorem \ref{thm:duality}, which implies $X^T X = I_n$, the result follows (see Figure \ref{fig:theo-3}, left).
\end{proof}

\begin{theorem*}[Half duality II] \label{theo:HD2}
Given two dual networks, if the samples of the input matrix $X$ are uncorrelated with unit variance and if  $W_2=Y_1$ (second dual constraint), then  $W_1=Y_2$ (first dual constraint).
\end{theorem*}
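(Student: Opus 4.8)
The plan is to mirror the argument of Half duality I verbatim, swapping the roles of the base and dual networks and, correspondingly, swapping which Gram identity gets invoked at the end. Concretely, I would keep the two defining linear-network equations $Y_1 = W_1 X$ and $Y_2 = W_2 X^T$ (which hold by construction for the two architectures) and treat the second dual constraint $W_2 = Y_1$ as the hypothesis to exploit, aiming to recover the first dual constraint $W_1 = Y_2$.

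First I would fold the hypothesis into the base-network equation: since $W_2 = Y_1$ and $Y_1 = W_1 X$, it follows at once that $W_2 = W_1 X$, which expresses the dual weight matrix purely through the base weight matrix and the data. Next I would push this expression through the output equation of the dual network: substituting $W_2 = W_1 X$ into $Y_2 = W_2 X^T$ gives $Y_2 = W_1 X X^T$. At that point the only remaining task is to eliminate the Gram factor $X X^T$, and I would invoke the uncorrelated-unit-variance assumption in the form $X X^T = I_d$ — exactly the identity already derived from this same assumption in the proof of Theorem \ref{thm:duality}. This collapses $Y_2 = W_1 X X^T$ to $Y_2 = W_1$, i.e. $W_1 = Y_2$, which is the first dual constraint and completes the argument.

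The step I expect to be the only genuine subtlety is the bookkeeping over which orthonormality identity to apply. Because the hypothesis and the conclusion are interchanged relative to Half duality I, one must use $X X^T = I_d$ (so that right-multiplication by $X X^T$ is the one that collapses) rather than the $X^T X = I_n$ used in Half duality I; applying the wrong one would leave a residual factor and stall the proof. Everything else is a routine symmetric substitution, so no new idea beyond the two duality relations and the orthonormality assumption is required — the two half-duality results together simply establish that, under uncorrelated unit variance, the two dual constraints are equivalent.
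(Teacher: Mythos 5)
Your proof is correct and follows exactly the same route as the paper's: fold $W_2 = Y_1 = W_1 X$ into the dual network's output equation to get $Y_2 = W_1 X X^T$, then collapse via $X X^T = I_d$. Your remark about which Gram identity applies ($X X^T = I_d$ here, versus $X^T X = I_n$ in Half duality I) matches the paper's usage as well.
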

\begin{proof}
From the second dual constraint (see Figure \ref{fig:theo-3}, left), for the second network it stems:
\begin{equation}
    Y_1 = W_2 = W_1 X
\end{equation}
From the assumption on the inputs (rows of the matrix $X$), it results  $X X^T = I_d$. The first neural architecture yields (see Figure \ref{fig:theo-3}, right):
\begin{equation}
    Y_2 = W_2 X^T \implies Y_2 = W_1 X X^T = W_1
\end{equation}
\end{proof}
Theorem \ref{theo:HD2} justifies the use of the first single-layer neural network as a competitive layer.

\begin{figure}[t]
    \centering
    \includegraphics[width=0.7\columnwidth,trim={8cm 1cm 6cm 3.5cm},clip]{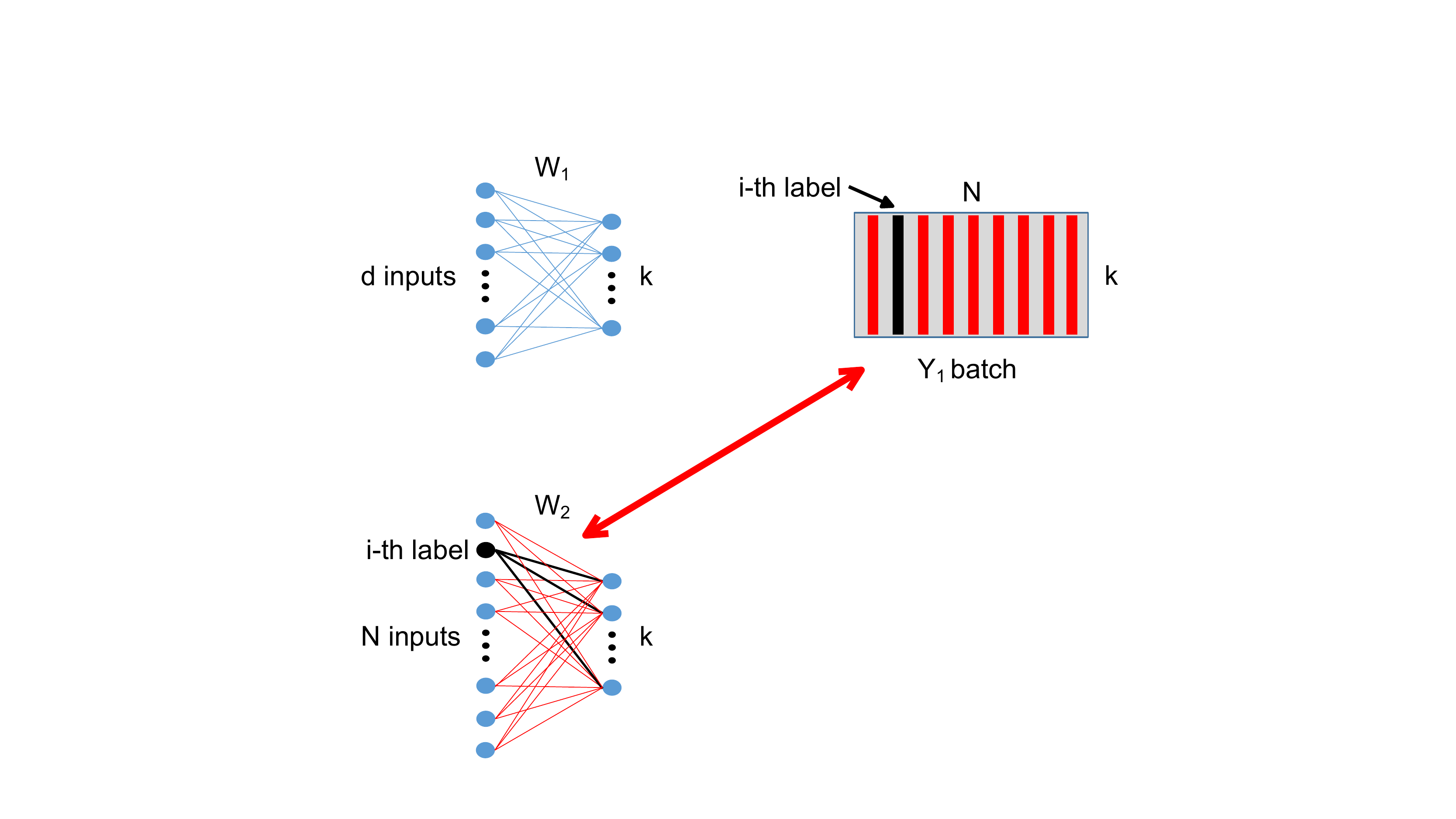}
    \includegraphics[width=0.7\columnwidth,trim={8cm 2cm 5.2cm 1cm},clip]{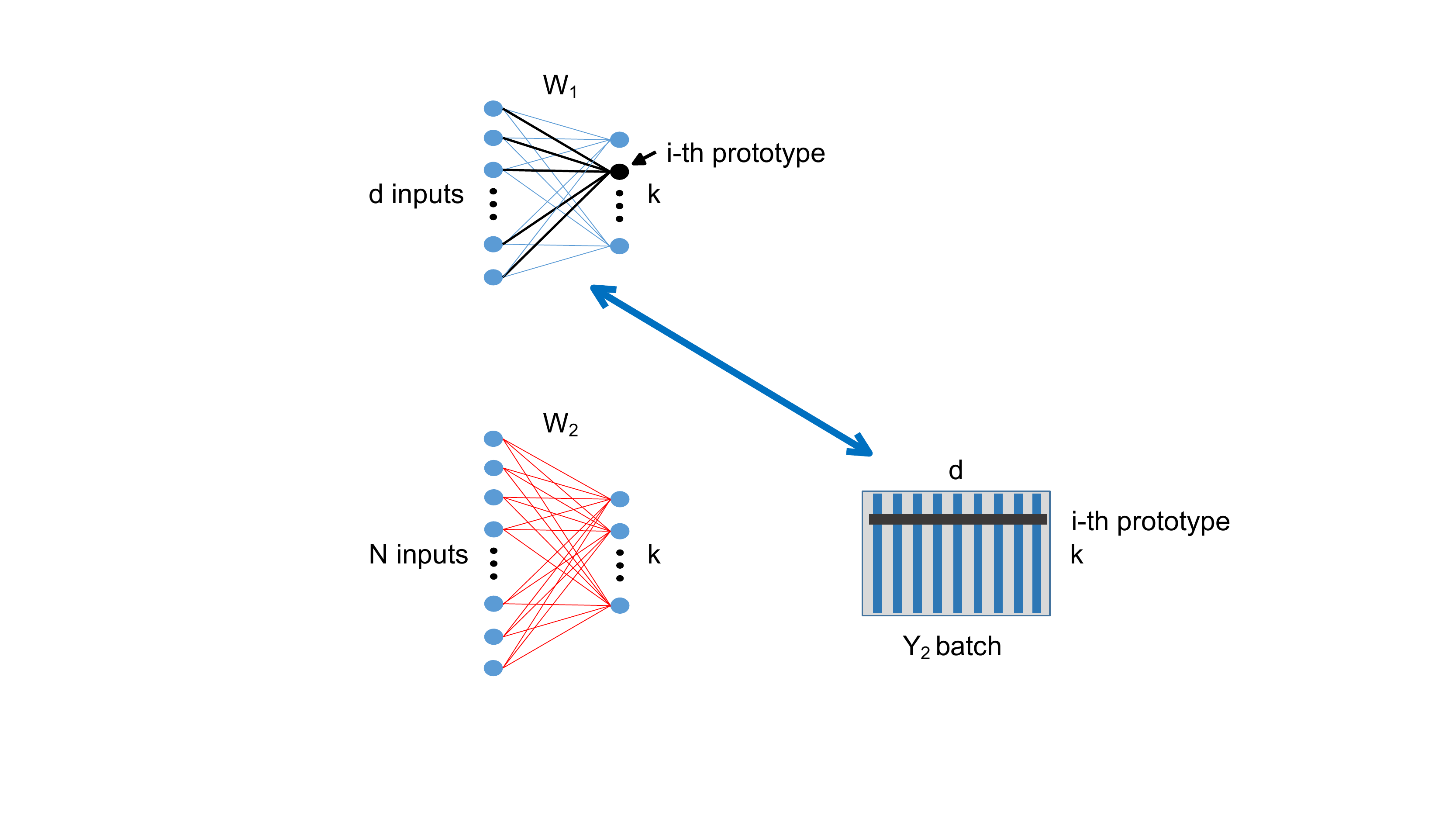}
    \vskip -0.5cm
    \caption{Half dualities.}
    \vskip -0.3cm
    \label{fig:theo-3}
\end{figure}

\subsection{Analysis of the learning process} \label{sec:analysis}
The theorems illustrated in the last section establish a set of conditions under which a base competitive layer (e.g. GBC) and its dual network (e.g. DGBC) are equivalent. However, this theory shows such an equivalence only in terms of the architecture of the two neural networks. By analyzing the forward and the backward pass, the learning process of the two layers is quite different. In particular, in the GBC layer there is no forward pass as $Y_1$ is not computed nor considered and the prototype matrix is just the weight matrix $W_1$:
\begin{equation}
    P_1 = \Big[\textrm{prototype}_1, \dots, \textrm{prototype}_k \Big] = W_1
\end{equation}
where $\textrm{prototype}_i \in \mathbb{R}^{d \times 1}$.
In the dual network, instead, the prototype matrix corresponds to the output $Y_2$; hence, the forward pass is a linear transformation of the input $X^T$ through the weight matrix $W_2$:
\begin{eqnarray}
P_2 &=& \Big[\textrm{prototype}_1 \dots \textrm{prototype}_k \Big]^T = Y_2 =  W_2 X^T = \nonumber \\
&=&
\begin{bmatrix}
\mathbf{w}_1^T \mathbf{f}_1 & \mathbf{w}_1^T \mathbf{f}_2 & \dots & \mathbf{w}_1^T \mathbf{f}_d \\
\mathbf{w}_2^T \mathbf{f}_1 & \mathbf{w}_2^T \mathbf{f}_2 & \dots & \mathbf{w}_2^T \mathbf{f}_d \\
\dots & \dots & \ddots & \vdots \\
\mathbf{w}_k^T \mathbf{f}_1 & \mathbf{w}_k^T \mathbf{f}_2 & \dots & \mathbf{w}_k^T \mathbf{f}_d \\
\end{bmatrix}
\end{eqnarray}
where $\mathbf{w}_i$ is the weight vector of the $i$-th output neuron of the dual network and $\mathbf{f}_i$ is the $i$-th feature over all samples of the input matrix $X$.
The components of each prototype are computed using a constant weight $\mathbf{w}_i$, because $P_2$ is an outer product, which has rank $1$. Besides, each component is computed as it were a one dimensional learning problem. For instance, the first component of the prototypes is $\Big[ \mathbf{w}_1^T \mathbf{f}_1 \dots \mathbf{w}_k^T \mathbf{f}_1 \Big]^T$; which means that the first component of all the prototypes is computed by considering just the first feature $\mathbf{f}_1$. Hence, each component is independent from all the other features of the input matrix, allowing the forward pass to be just like a collection of $d$ one-dimensional problems.

Such differences in the forward pass have an impact on the backward pass as well, even if the form of the loss function is the same for both systems. However, the parameters of the optimization are not the same. For the base network:
\begin{equation}
    \mathcal{L} = \mathcal{L} (X,W_1)
\end{equation}
while for the dual network:
\begin{equation}
    \mathcal{L} = \mathcal{L} (X^T,Y)
\end{equation}
where $Y$ is a linear transformation (filter) represented by $W_2$.
In the base competitive layer the gradient of the loss function with respect to the weights $W_1$ is computed directly as:
\begin{eqnarray}
\nabla \mathcal{L} (W_1) = \frac{d\mathcal{L}}{dW_1}
\end{eqnarray}
On the other hand, in the dual competitive layer, the chain rule is required to computed the gradient with respect to the weights $W_2$ as the loss function depends on the prototypes $Y_2$:
\begin{eqnarray}
\nabla \mathcal{L} (W_2) = \frac{d\mathcal{L}}{dW_2} = \frac{d\mathcal{L}}{dY_2} \cdot \frac{dY_2}{dW_2}
\end{eqnarray}
As a result, despite the architecture of the two layers is equivalent, the learning process is quite different.

\subsection{Qualitative analysis and comparison}

The differences outlined in the previous subsection 
may have an impact in favoring one or the other layer depending on the problem. The main advantage of using the base competitive layer consists in a lower computational cost, as the forward pass is not required and the backward pass is much easier to compute. Besides, for low dimensional datasets, i.e. when $N \gg d$, the size of the weight matrix $W_1$ is $d \times k$, while the size of $W_2$ is $N \times k$. This means that the number of parameters of the dual network is much higher with respect to the base layer, leading to a even higher computational cost. However, by having more parameters, the dual layer may have an advantage in terms of flexibility and in finding better local minima. 
On the other hand, in high-dimensional settings, when $N \ll d$, the matrix $W_1$ is much larger than $W_2$. Hence, by having fewer parameters to optimize, the dual layer behaves like a system with a larger set of constraints, leading to smoother gradient flows and less overfitting. 

Furthermore, another reason why the dual layer might be less sensitive to the number of features may depend on its learning process. Indeed, the forward pass decouples the original problem into a set of $d$ one-dimensional problems, while the loss function and the gradient perform the coupling of such problems. Finally, by considering how the two layers build their prototypes, the dual network seems more suitable for joining with deep neural networks. Indeed, the base network is an atypical layer as it does not perform a forward pass at all. The dual network, instead, is more similar to a regular layer as it applies a linear transformation to its input. This latter linear map could also be generalized to a nonlinear transformation by stacking a set of dense layers with nonlinear activation functions.

\subsection{Deep dual clustering}
The fact that the dual layer is designed for using the gradient of the loss function for training allows to back-propagate to other previous layers in order to preprocess implicitly the training set. The same can be said for the base network. However, the latter is not a true layer. Indeed, it is simply a minimization process in which the weights are directly estimated. For this reason, the output has no meaning. Instead, the dual one has meaningful outputs and, so, has the same nature of the blocks composing a deep neural network. 

The deep dual network is composed of a stack of fully-connected layers. The first layer is fed with the transpose of the input matrix X and the last layer is the dual linear layer. All the hidden layers have non linear activation functions (\textit{tanh}), but the output layer is linear. This approach allows the invariance of the feature dimension at each layer. Instead, it is the number of samples that changes at each step. In this way, clustering centroids are directly estimated in the original input space, despite the fact that they are pre-processed in the hidden layers.  


\subsection{Extension to topological clustering}
\label{sec:experiments}

Topological clustering refers to a class of techniques where cluster centroids are connected during the learning process such that a Delaunay triangulation of the data manifold is induced. One of the most common approaches employs CHL at this purpose: If two prototypes are the two closest centroids for the same sample, an edge is created between them, representing their mutual relationship outlined by the common neighborhood.


\begin{figure*}[!ht]
    \centering

    \includegraphics[width=0.33\columnwidth]{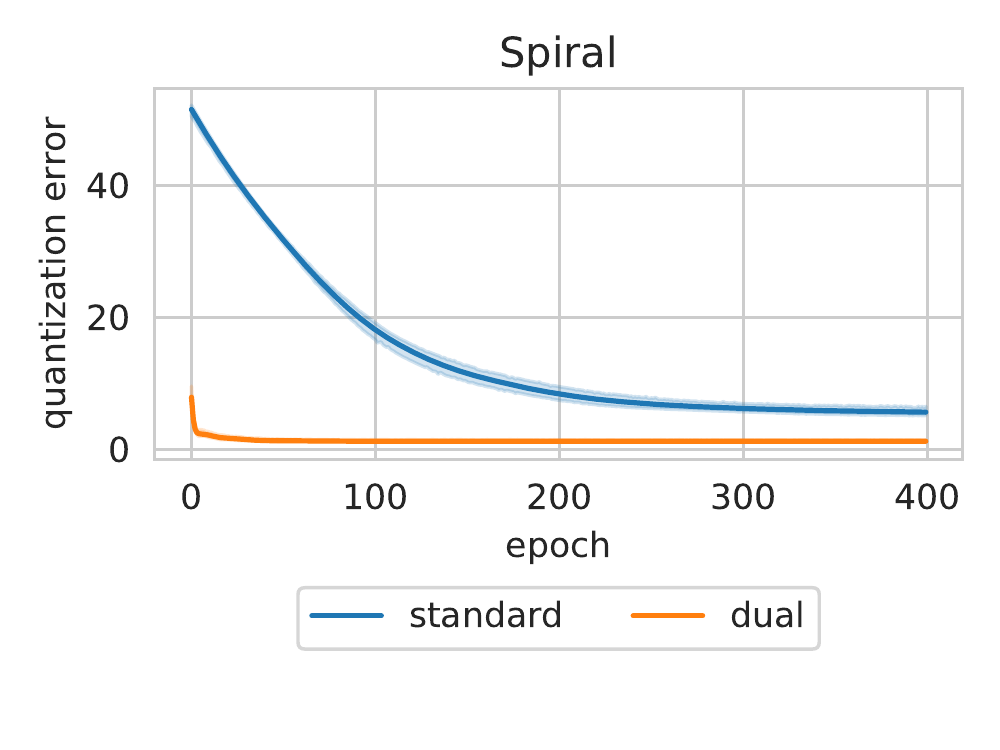}
    \includegraphics[width=0.33\columnwidth]{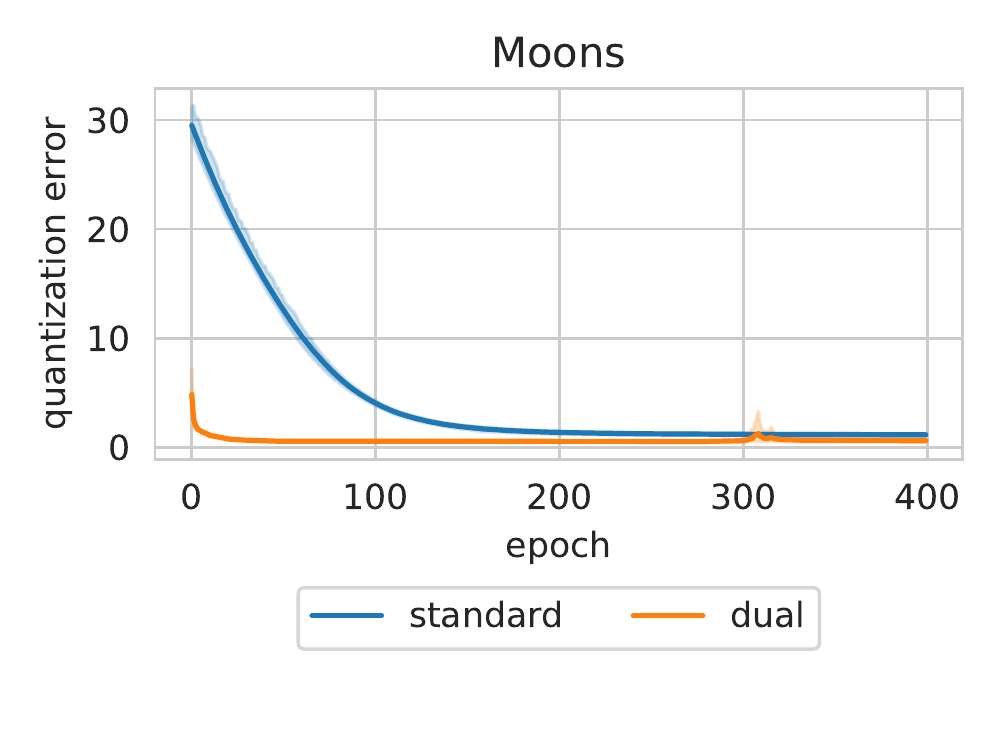}
    \includegraphics[width=0.33\columnwidth]{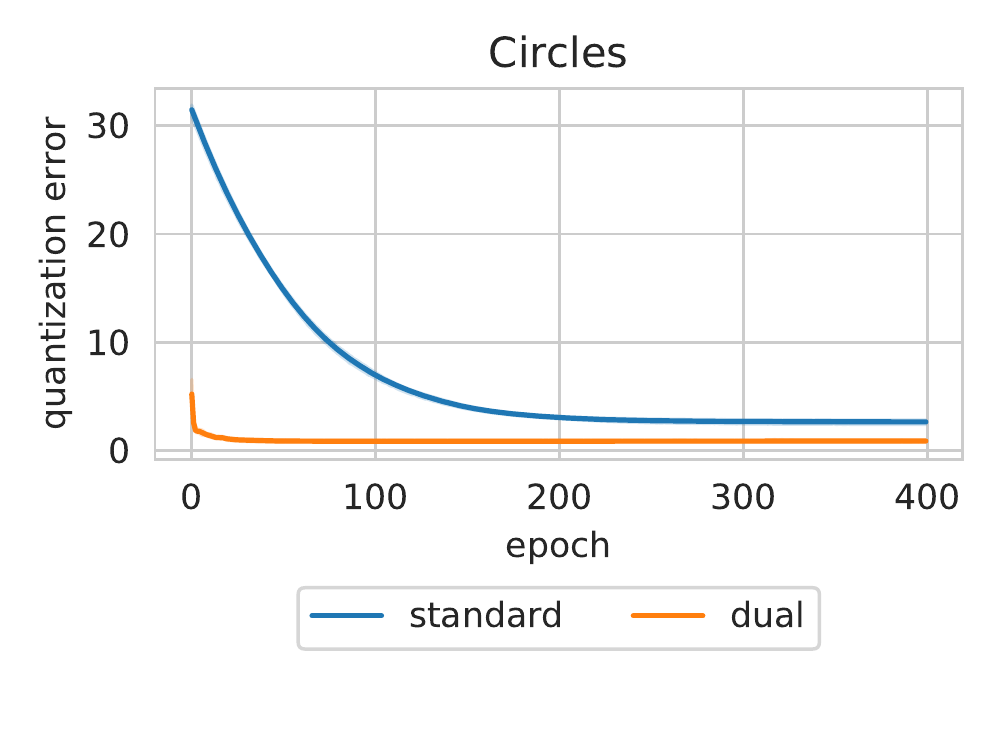}\\
    \vskip -0.85 cm
    \includegraphics[width=0.33\columnwidth]{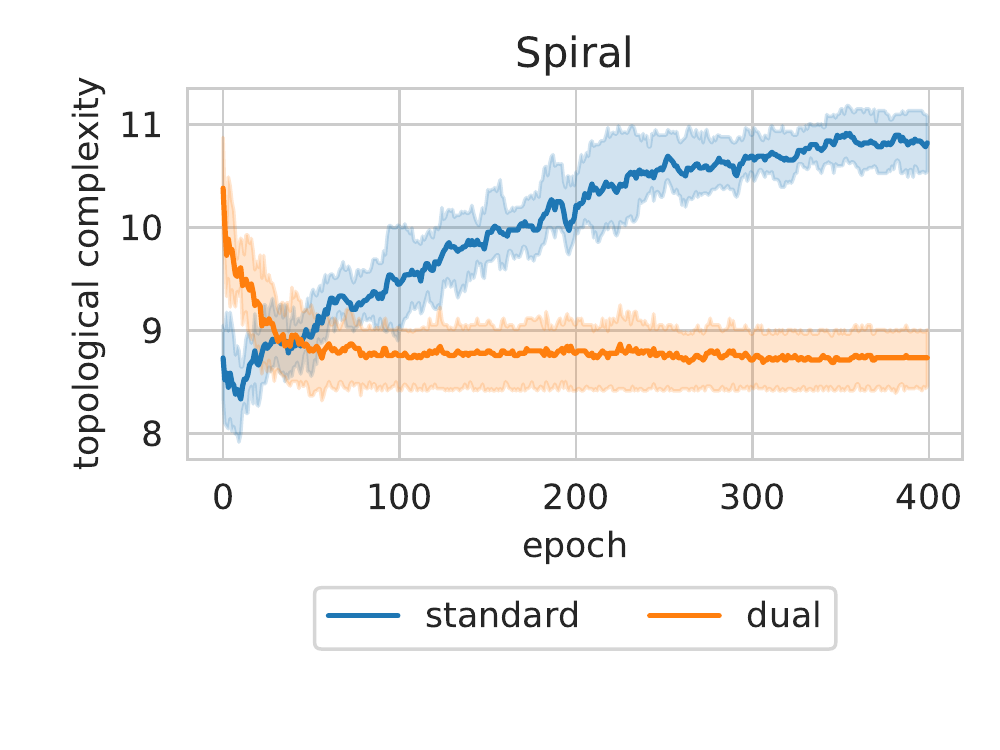}
    \includegraphics[width=0.33\columnwidth]{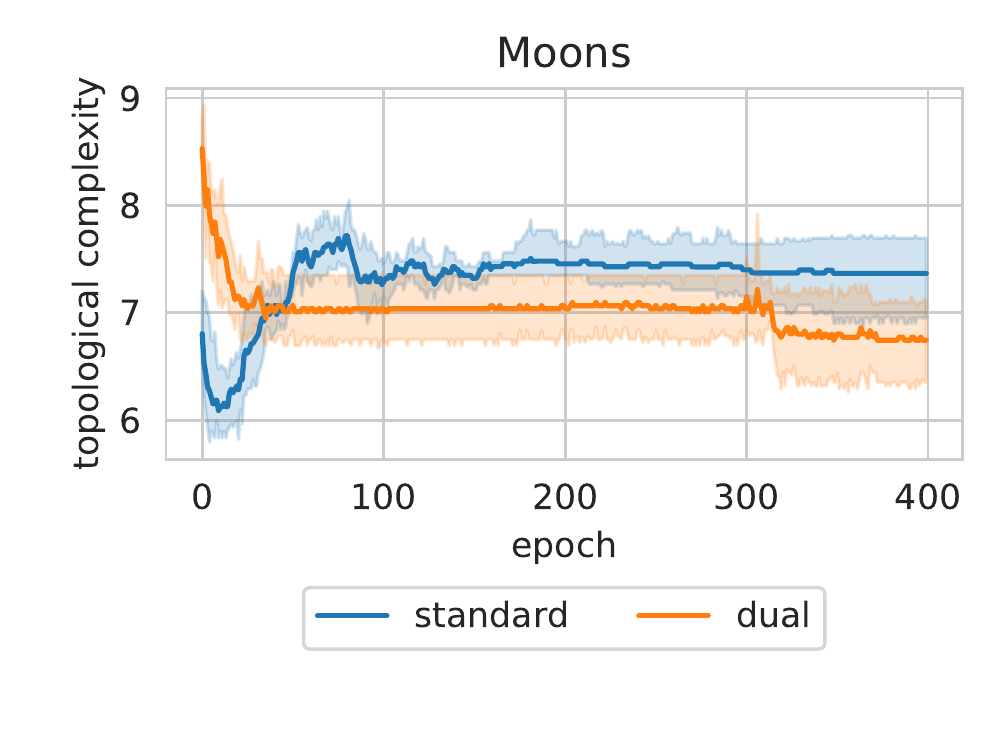}
    \includegraphics[width=0.33\columnwidth]{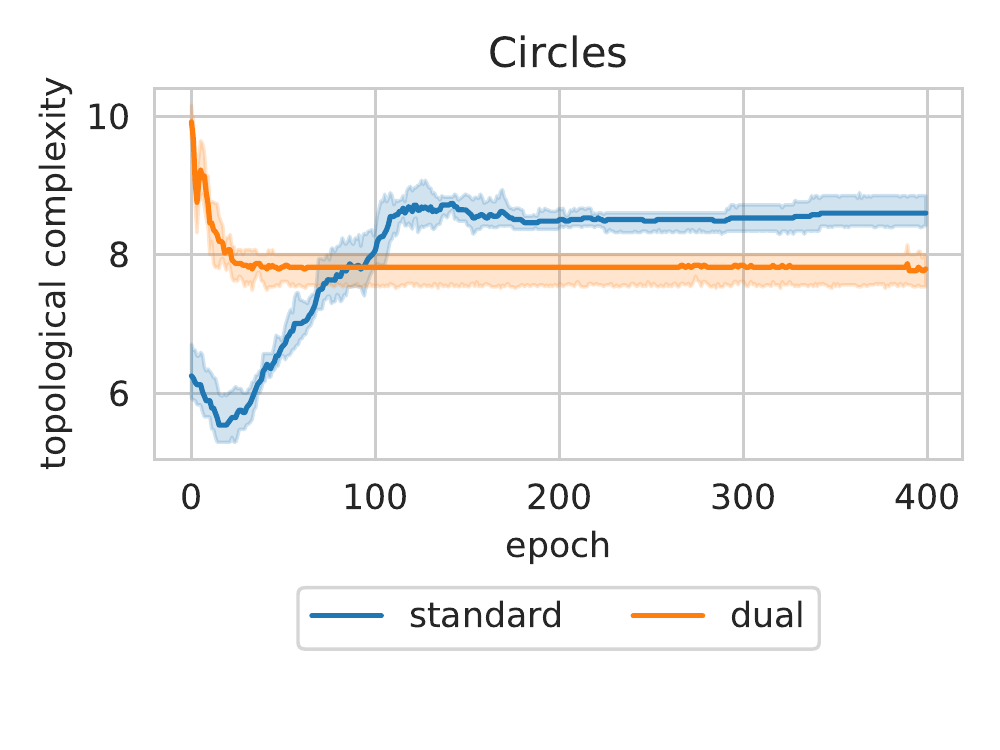}\\
    \vskip -0.85 cm
    \includegraphics[trim=0 50 0 0, clip, width=0.33\columnwidth]{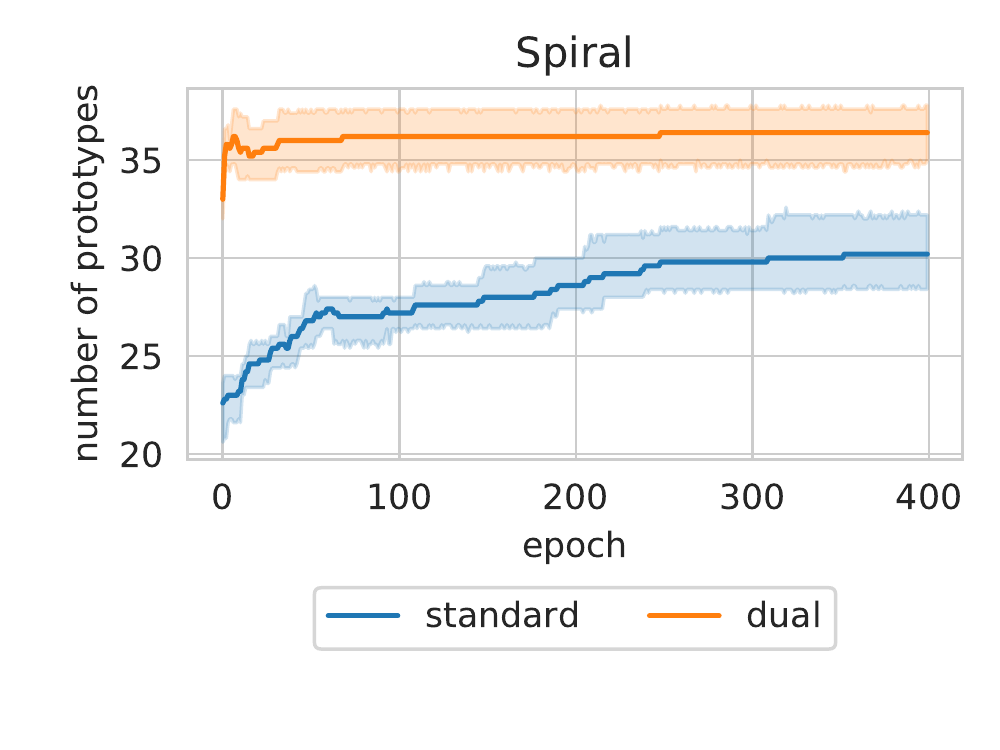}
    \includegraphics[trim=0 50 0 0, clip, width=0.33\columnwidth]{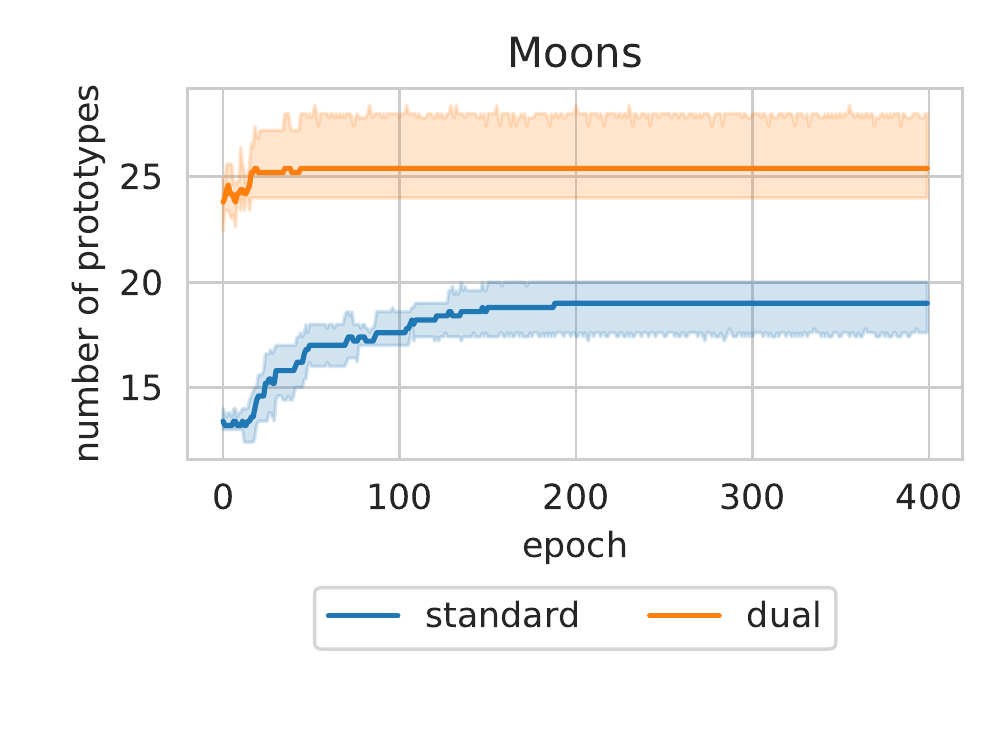}
    \includegraphics[trim=0 50 0 0, clip, width=0.33\columnwidth]{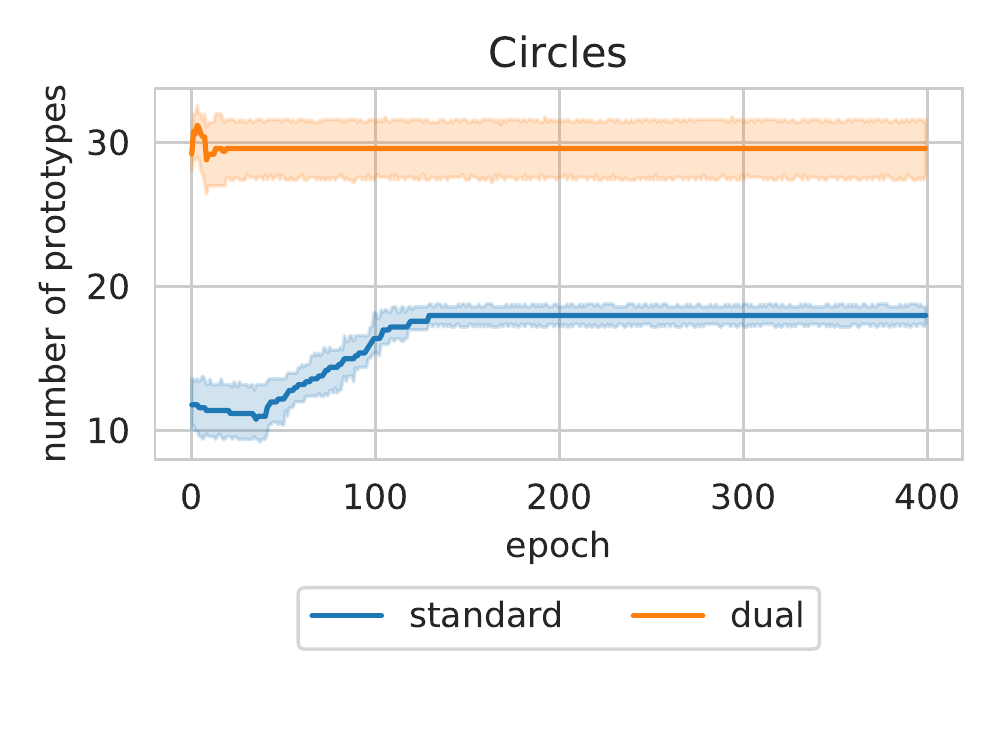}
    \vskip -0.1 cm
    \caption{Comparison GBC (\textbf{blue line}) and DGBC (\textbf{red line}) over $10$ runs on three metrics: the quantization error (\textbf{top row}), the norm of the matrix of the edges (\textbf{middle row}), and the number of valid prototypes (\textbf{bottom row}). The metrics are computed on three different datasets: \textit{Spiral} (\textbf{left column}), \textit{Moons} (\textbf{middle column}), and \textit{Circles} (\textbf{right column}).}
    \vskip -0.1 cm
     \label{fig:loss}
\end{figure*}

The framework developed in the previous section 
can be easily adapted to accommodate for this kind of learning task. 
In this sense, the loss function main term represents a clustering index, the quantization error or the ratio between inter- and intra-cluster distances. However, in order to learn the minimal topological relationship, the loss function can be augmented by a Lagrangian term accounting for the complexity of the network connecting prototypes. At the end of each epoch, the adjacency matrix $E$, which represents the connections between prototypes using CHL, is computed and its norm is also included in the loss function. The gradient of the resulting loss can be computed in order to optimize prototypes' positions such that the complexity of the connections is minimized.
The overall loss function looks like:

\begin{equation}
    \mathcal{L} = \mathcal{Q} + \lambda ||E||_2
\end{equation}

where $\mathcal{Q}$ is the quantization error (average squared Euclidean distance between samples and corresponding centroids) and $E$ is the adjacency matrix representing the connections between prototypes. At the end of the learning process, prototypes without connections and with an empty Voronoi set can be pruned. Hence, the number of output units in the last layer represents an upper bound of the number of valid prototypes, as the neural network will automatically prune redundant centroids.

\section{Experimental evaluation}

In order to validate the theory with non-trivial experiments and to analyze the differences of the two learning approaches, the base competitive layer and its dual network are compared on three synthetic datasets containing clusters of different shapes and sizes.
Table \ref{tab:datasets} summarizes the main characteristics of each experiment.
The first dataset is composed of samples drawn from a two-dimensional Archimedean spiral (\textit{Spiral}). The second dataset consists of samples drawn from two half semicircles (\textit{Moons}). The last one is composed of two concentric circles (\textit{Circles}).
Each dataset is normalized by removing the mean and scaling to unit variance before fitting neural models. For all the experiments, the number of output units $k$ of the dual network is set to $30$. A grid-search optimization is conducted for tuning the hyperparameters. The
learning rate is set to $\epsilon=0.008$ for the base competitive layer and to $\epsilon=0.0008$ for its dual layer. Besides, for both networks, the number of epochs is equal to $\eta=400$ while the Lagrangian multiplier to $\lambda=0.01$.
For each dataset, both networks are trained $10$ times using different initialization seeds in order to statistically compare their performance.
All the code has been implemented in Python 3, relying upon open-source libraries \cite{abadi2016tensorflow,pedregosa2011scikit}. 

\begin{table}[!ht]
\renewcommand{\arraystretch}{1.5}
\centering
\caption{Synthetic datasets used for the experiments.}
\label{tab:datasets}
\begin{center}
\begin{sc}
\begin{tabular}{@{}lrrr@{}}
\toprule
dataset & samples & features & clusters \\ \midrule
Spiral & 500 & 2 & 1 \\
Moons & 500 & 2 & 2 \\
Circles & 500 & 2 & 2 \\
\bottomrule
\end{tabular}%
\end{sc}
\end{center}
\vskip -0.1in
\end{table}

Qualitative results are presented in Figure \ref{fig:exp1}. The solutions provided by the base competitive layer are shown in the first row, while the dual network ones are in the second row. Nodes (prototypes) belonging to the same connected component are linked with edges according to their neighborhood. Samples are represented with different colors depending on the cluster they belong to. Qualitative considerations considering the location of prototypes suggest that good clustering performance can be obtained using both networks. However, as shown in Figure \ref{fig:loss}, the dual network tends to propose solutions using a slightly higher number of prototypes, thus finding better connections between them and providing a superior representation of the underlying topology, especially considering the \textit{Spiral} and the \textit{Circles} datasets, where clusters are well separated.

\begin{figure}[!b]
    \rotatebox{90}{$\ \ \ \ $\parbox{0.5cm}{\footnotesize{\textsc{GBC}}}}
    \includegraphics[width=0.315\columnwidth]{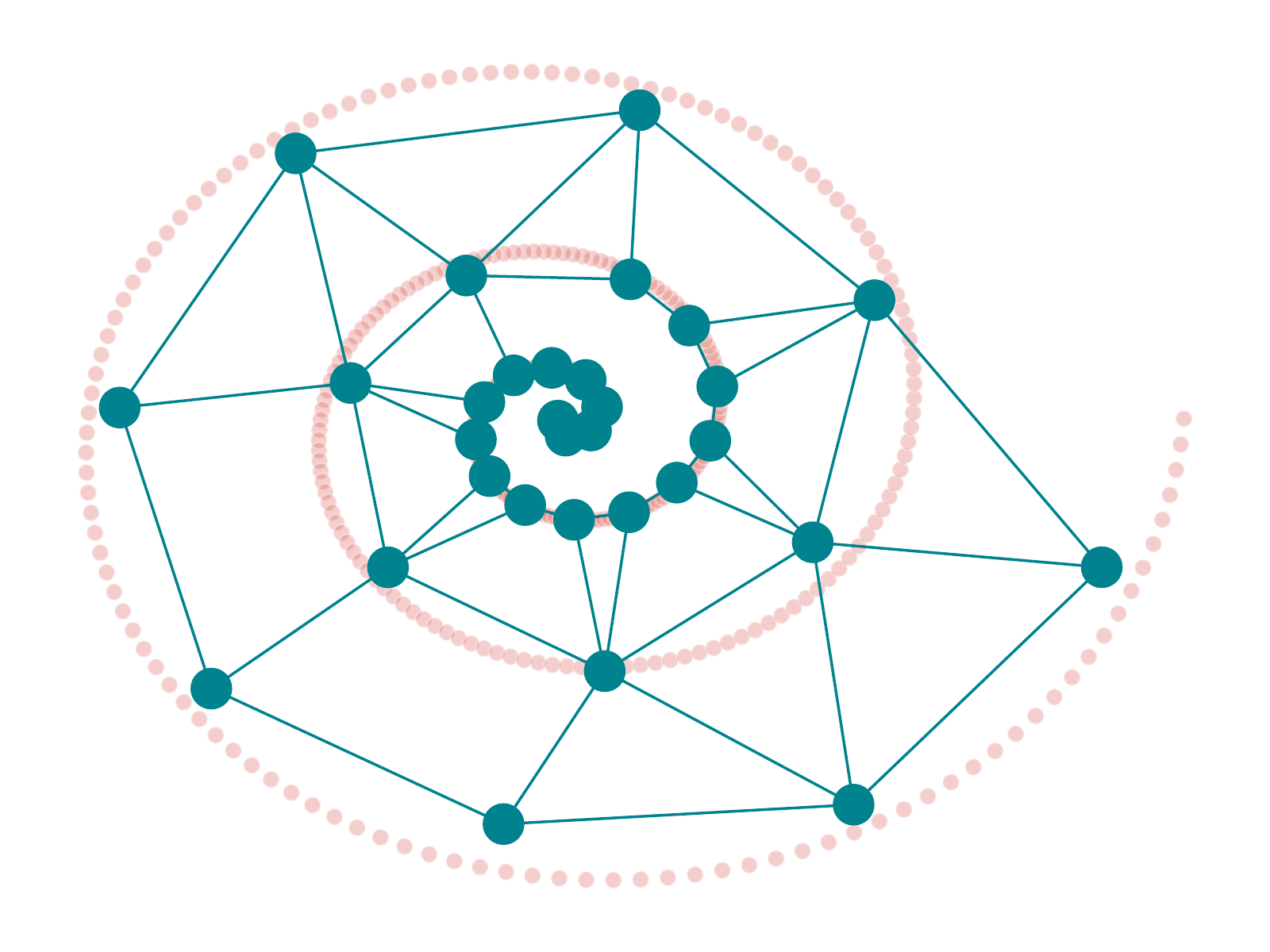}
    \includegraphics[width=0.315\columnwidth]{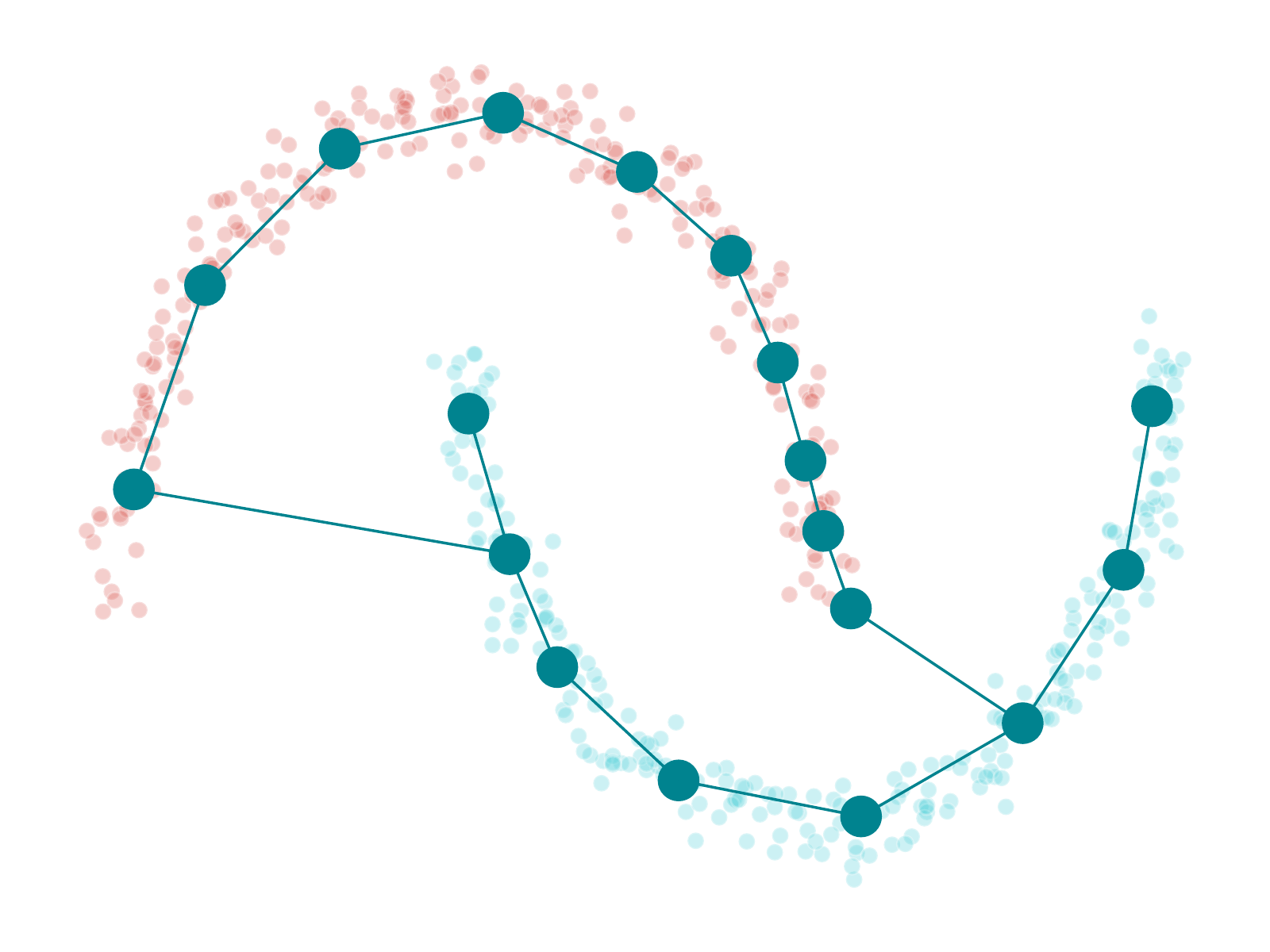}
    \includegraphics[width=0.315\columnwidth]{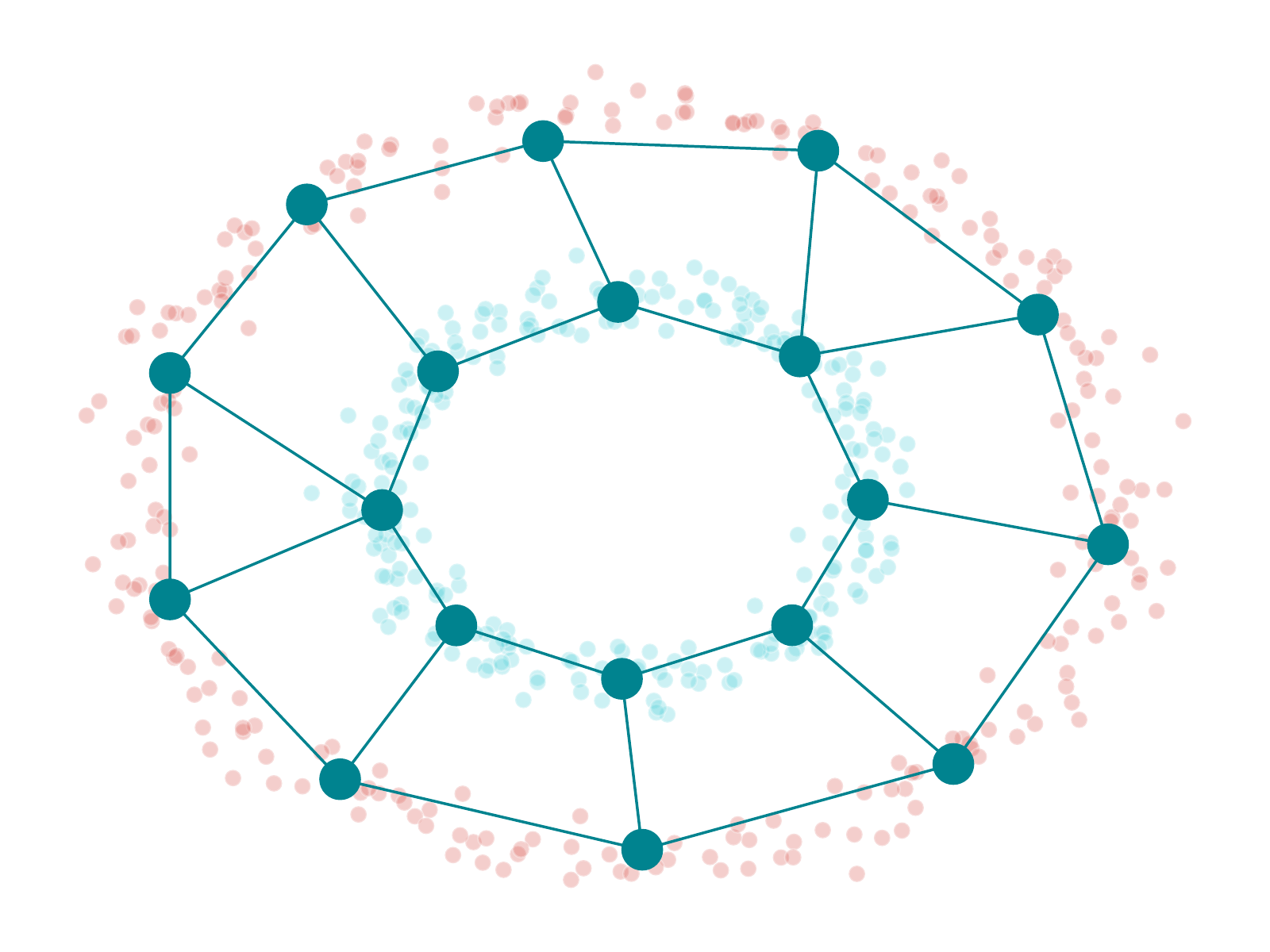}\\
    \rotatebox{90}{$\ \ \ \ $\parbox{0.5cm}{\footnotesize{\textsc{DGBC}}}}
    \includegraphics[width=0.315\columnwidth]{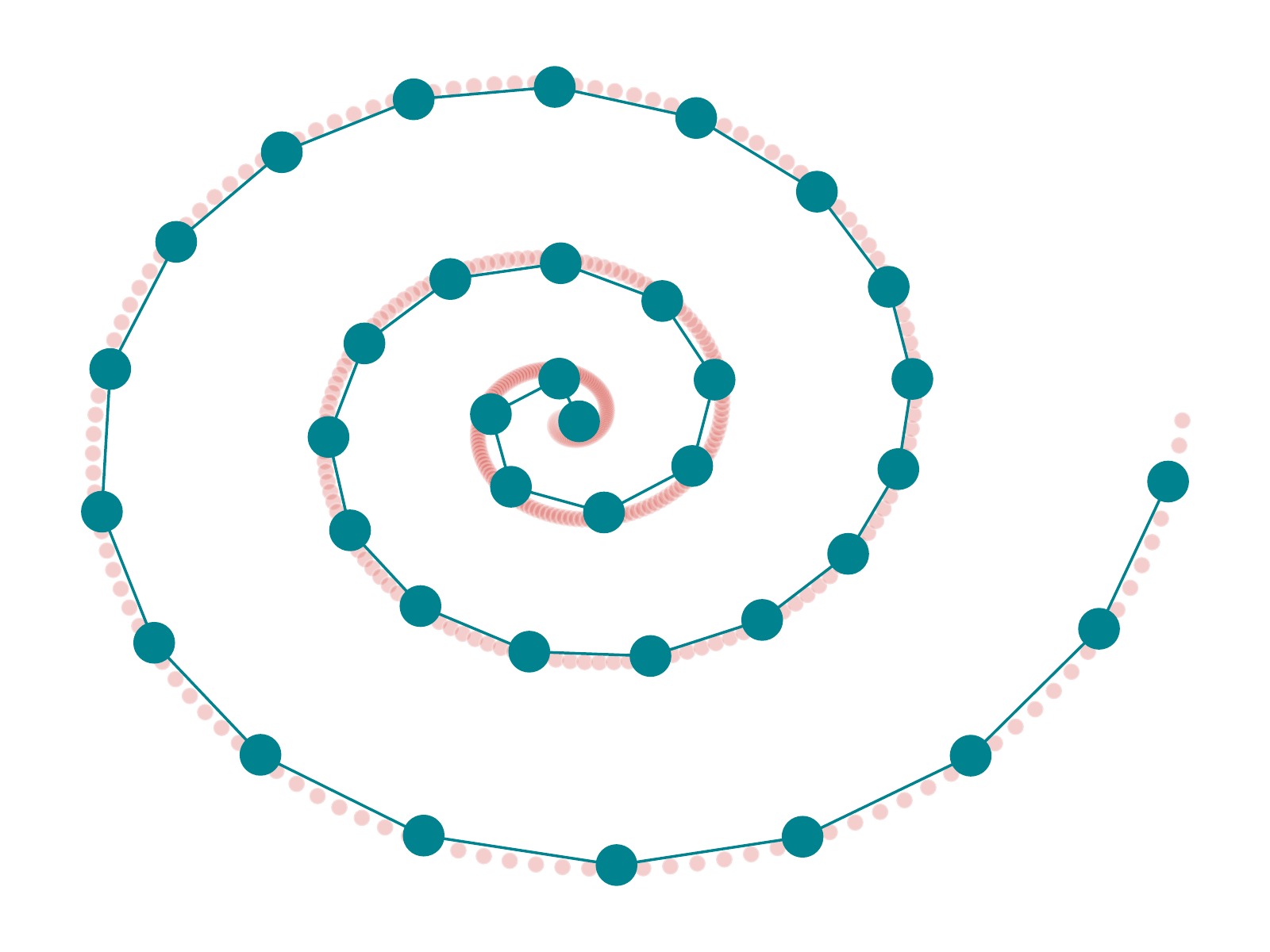}
    \includegraphics[width=0.315\columnwidth]{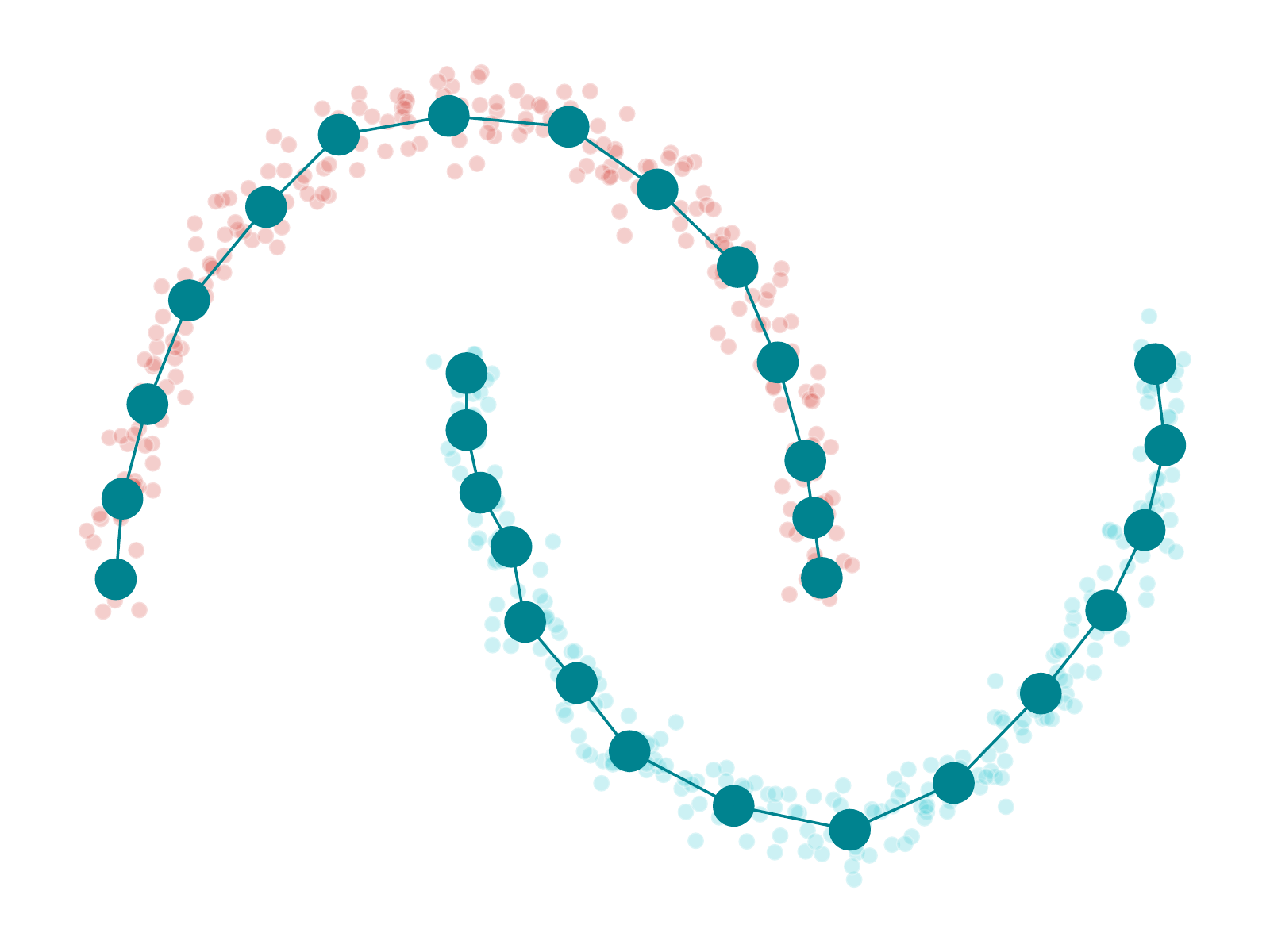}
    \includegraphics[width=0.315\columnwidth]{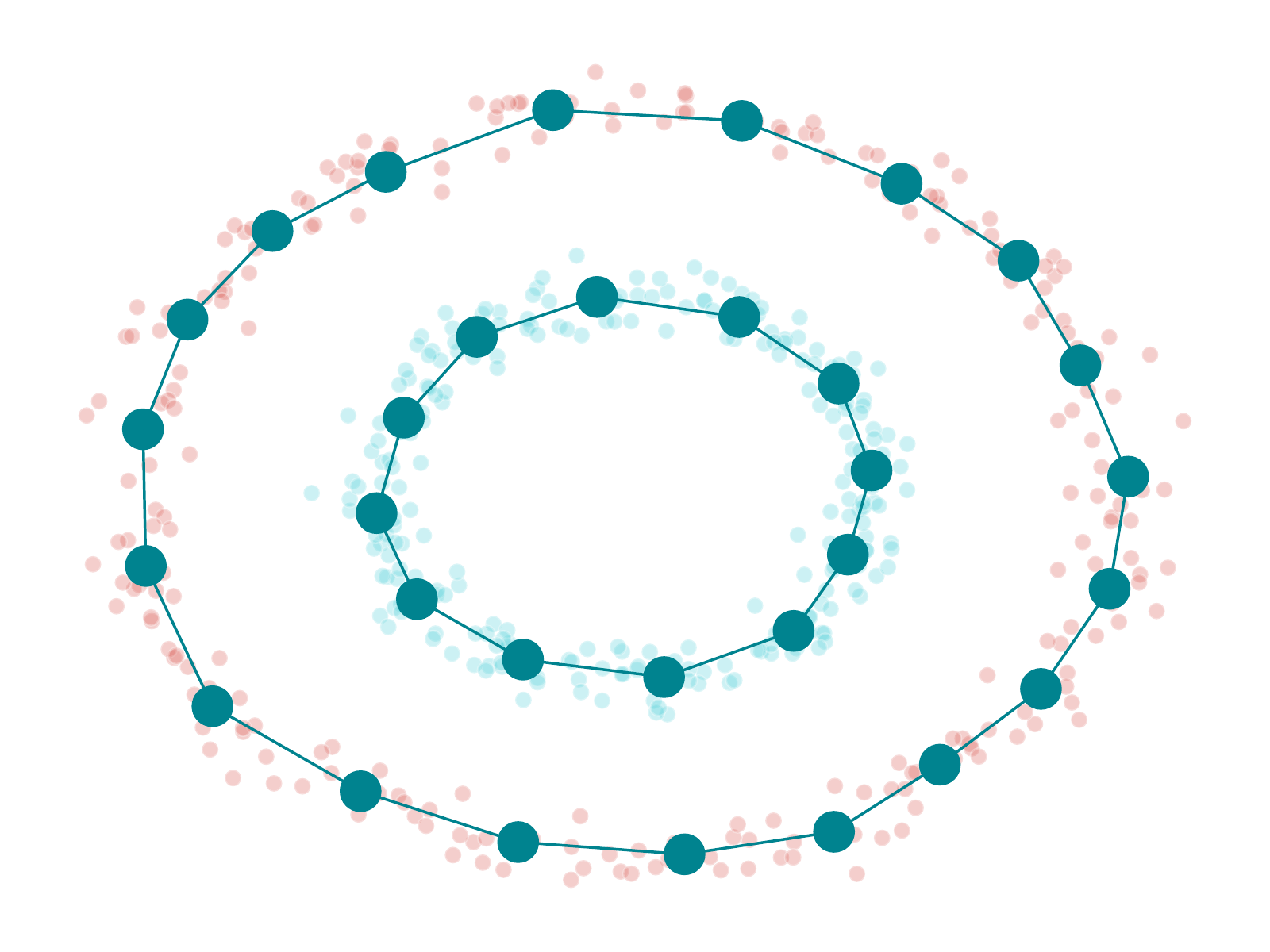}\\
    \vskip -0.5cm
    \caption{Experiments on synthetic datasets. From left to right: \textit{Spiral}, \textit{Moons}, and \textit{Circles} dataset.}
    \label{fig:exp1}
\end{figure}

In order to assess the main characteristics of the learning process, several metrics are evaluated while training the two networks on the three benchmark datasets. Figure \ref{fig:loss} shows for each dataset a comparison between the base layer and its dual on three key metrics: the quantization error, the topological complexity of the solution (i.e. the norm of the edge matrix), and the number of valid prototypes (i.e. the ones with a non-empty Voronoi set). 
The main differences between the two approaches are outlined by the quantization error.
Both networks seem to converge to similar local minima in all scenarios, thus validating their theoretical equivalence.
Nonetheless, the single-layer dual network exhibits a much faster rate of convergence compared to a standard competitive layer. The training of the dual network appears much more stable as outlined by a much lower variance of the quantization error. 

\subsection{An application to high-dimensional clustering}
Here the performance of the standard competitive layer and its dual network in tackling high dimensional problems is assessed. Sure enough, standard distance-based algorithms generally suffer the well-known curse of dimensionality when dealing with high-dimensional data. Therefore, the intuition described in previous Section about dual-layer performances in this scenario is evaluated by working with an increasing number of features and a fixed number of samples. 
The MADELON algorithm proposed in \cite{guyon2003design} is used to generate the high-dimensional datasets. This algorithm creates clusters of points normally distributed about vertices of an n-dimensional hypercube. An equal number of cluster and data is assigned to two different classes. Both the number of samples ($n_s$) and the dimensionality of the space ($n_f$) in which they are placed can be defined programmatically. 
More precisely, the number of samples is set to $n_s=100$ while the number of features ranges in $n_f \in [1000, 2000, 3000, 5000, 10000]$. 
The number of required centroids is fixed to one tenth the number of input samples. Three different networks are compared: the base network (GBC layer), a single dual layer network (DGBC), and a deep variant of the dual network with two hidden layers of 10 neurons each (deep-DGBC). Results are averaged over $10$ repetitions on each dataset. Accuracy for each cluster is calculated by considering true positive those samples belonging to the class more represented and false positive the remaining data. As shown in the top plot of Fig. \ref{fig:high_dimensional}, GBC accuracy already drops when the number of feature is higher than $1000$. DBGC and deep-DBGC, instead, are more capable to deal with high-dimensional data and their accuracy remains near $100\%$ until $2000$ and $3000$, respectively. Nevertheless, all the proposed methods struggle when dealing with higher-dimensional data. 

\begin{figure}[th]
    \centering
    \includegraphics[trim= 10 0 10 25, clip, width=0.49\columnwidth]{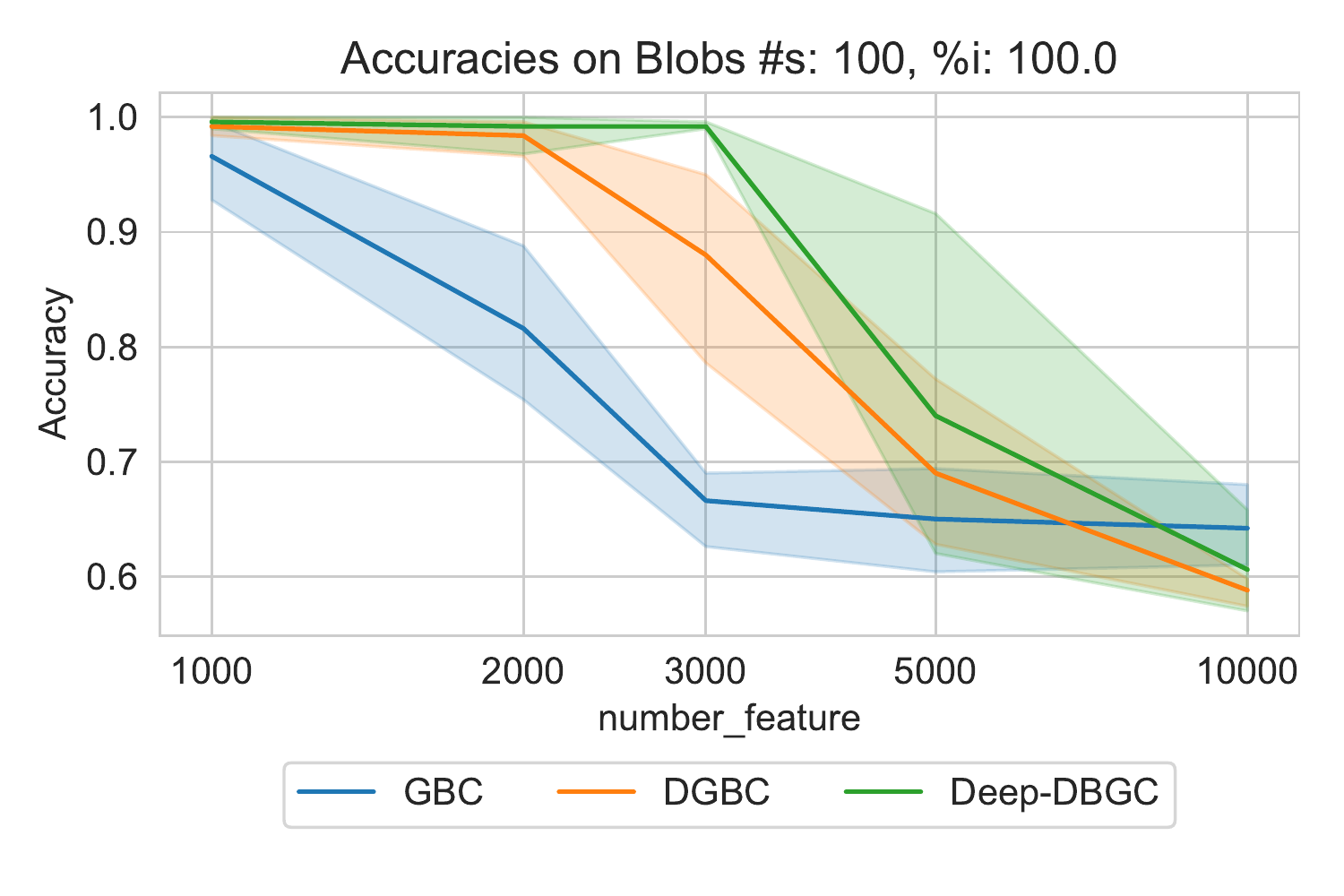}
    \includegraphics[trim= 10 50 10 25, clip, width=0.49\columnwidth]{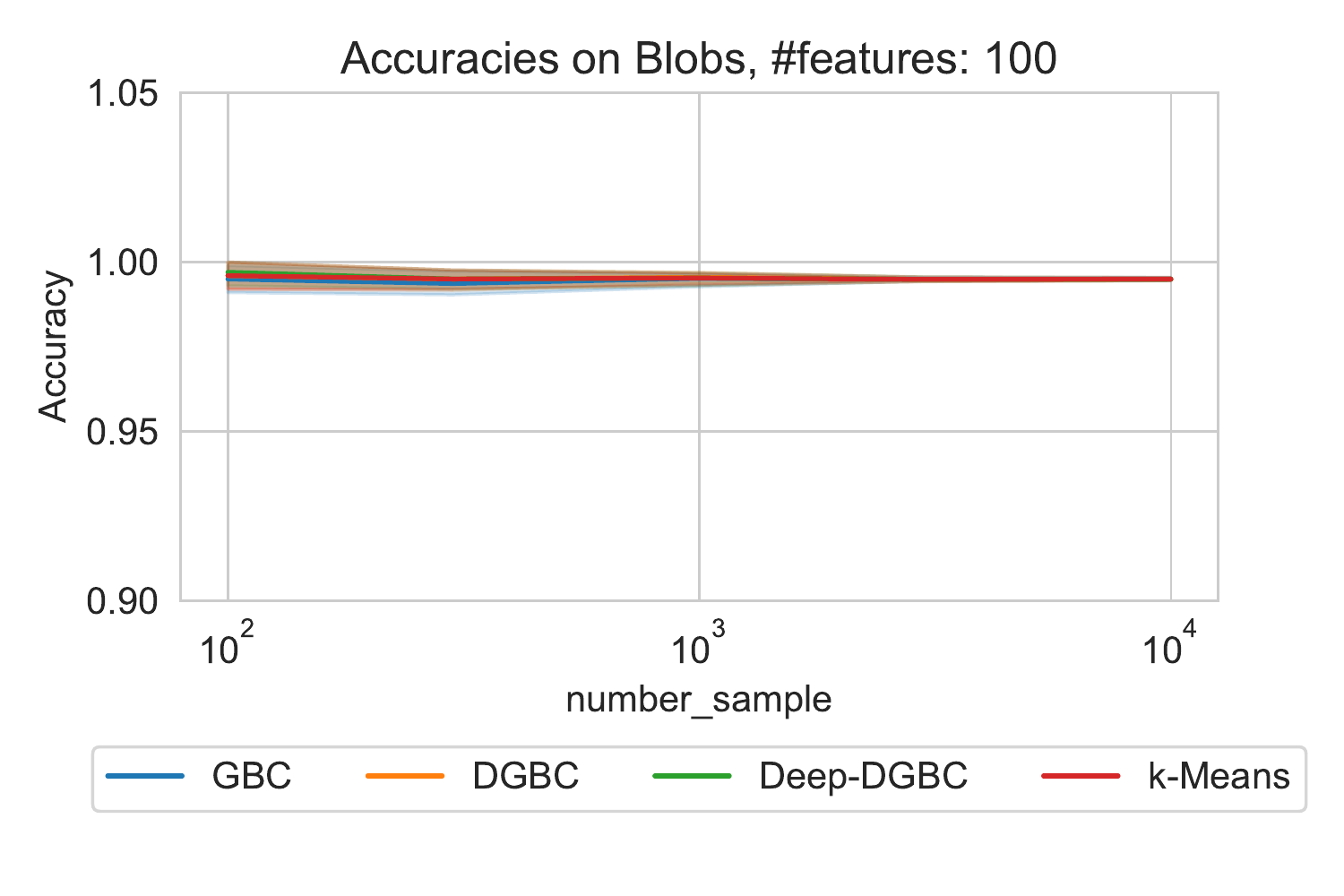}
    \vskip -0.3 cm
    \caption{Accuracy of the GBC and the DBGC layer, and the deep DBGC network tested while working with fixed number of samples and an increasing number of features (\textbf{top}) and a fixed number of features and an increasing number of samples (\textbf{bottom}) on the synthetic MADELON dataset \cite{guyon2003design}. Error bands represent the standard error of the mean.}
    \vskip -0.3 cm
    \label{fig:high_dimensional}
\end{figure}

A further experiment is also performed in order to check whether the opposite scenario holds true - i.e. that the DBGC layer was not suitable for working with a high number of samples (corresponding to a high number of network inputs). In order to do that we repeated the experiment on the MADELON dataset by setting a fixed number of features $n_f=100$, while working with an increasing number of samples $n_s \in [10^2, 10^3, 10^4]$. In the bottom plot of Fig.\ref{fig:high_dimensional}, it is shown that notwithstanding a higher computational complexity, DBGC and deep-DBGC are still capable to find a perfect quantization even when dealing with a very high number of samples.

\section{Conclusion}

This work sketches a novel interpretation of topological competitive learning using backpropagation. The foundation of a new theory is provided bridging two research fields which are usually thought as disjointed: gradient-based learning and unsupervised competitive neighborhood-based learning. This theory may represent the basis for a comprehensive reinterpretation of supervised and unsupervised learning with neural networks. 
Besides, as outlined in the experimental section, the framework can be easily extended to integrate complex topological structures and relationships among prototypes.
The two novel competitive layers presented in this work represent the first steps towards the integration of competitive and topological learning with deep neural architectures, outlining the power and flexibility of the approach paving the way towards more advanced and challenging learning tasks such as: topological nonstationary clustering \cite{randazzo2018nonstationary}, hierarchical clustering \cite{ghng,cirrincione2020gh}, core set discovery \cite{barbiero2020uncovering,ciravegna2019discovering}, incremental and attention-based approaches, or multi-objective optimization of a latent space with topological constraints.

\section*{Software}

To enable code reuse, the Python code for the mathematical models including parameter values and documentation is freely available under Apache 2.0 Public License from a GitHub repository
\footnote{\url{https://github.com/pietrobarbiero/deep-topological-learning}}. The whole package can also be downloaded directly from PyPI\footnote{\url{https://pypi.org/project/deeptl/1.0.0/}}.
Unless required by applicable law or agreed to in writing, software will be distributed on an "as is" basis, without
warranties or conditions of any kind, either express or implied.

\bibliographystyle{unsrt}  
\bibliography{references}

\end{document}